
\typeout{IJCAI--ECAI 26 Instructions for Authors}


\documentclass{article}
\pdfpagewidth=8.5in
\pdfpageheight=11in

\usepackage{ijcai26}

\usepackage{times}
\usepackage{soul}
\usepackage{url}
\usepackage[hidelinks]{hyperref}
\usepackage[utf8]{inputenc}
\usepackage[small]{caption}
\usepackage{graphicx}
\usepackage{amsmath}
\usepackage{amsthm}
\usepackage{booktabs}
\usepackage{algorithm}
\usepackage{algorithmic}
\usepackage[switch]{lineno}
\usepackage{multirow}
\usepackage{makecell}
\usepackage{needspace}

\usepackage{amsmath,amssymb,amsthm}
\usepackage{mathtools}
\usepackage{enumitem}
\usepackage{xcolor}
\usepackage{amssymb}
\usepackage{array}

\usepackage{graphicx}   
\usepackage{subcaption} 
\captionsetup[subfigure]{
  labelfont=bf,
  textfont=normalfont
}

\usepackage{tikz}
\usetikzlibrary{calc}

\usepackage[dvipsnames]{xcolor}

\definecolor{lightred}{RGB}{241, 225, 222}
\definecolor{color1}{rgb}{0.1,0.498039215686275,0.9549019607843137}
\definecolor{alizarin}{rgb}{0.82, 0.1, 0.26}
\definecolor{antiquewhite}{rgb}{0.98, 0.92, 0.84}
\definecolor{azure}{rgb}{0.94, 1.0, 1.0}
\definecolor{offwhite}{rgb}{0.98, 0.95, 0.95}
\definecolor{pigment}{rgb}{0.2, 0.2, 0.6}
\definecolor{darkgray}{gray}{0.35} 

\newtheorem{definition}{Definition}
\newtheorem{lemma}{Lemma}



\newcommand{\PreAdv}{\mathsf{PreC}}

\newcommand{\downcl}{\downarrow \!}

\newcommand{\prem}{\mathsf{PreMC}}
\newcommand{\premm}{\mathsf{PreMMC}}
\newcommand{\maxc}{\mathsf{Max}}

\newcommand{\ltlf}{\text{LTL}_f}


\newcommand{\ltlU}{\,\textsf{U}\,}
\newcommand{\ltlX}{\textsf{X}\,}

\newcommand{\ltltrue}{\text{true}}
\newcommand{\ltlfalse}{\text{false}}
\newcommand{\ap}{\textsf{AP}}

\newcommand{\aut}{\mathcal{G}}
\newcommand{\arena}{\mathcal{G}}

\newcommand{\y}{\mathcal{Y}}
\newcommand{\x}{\mathcal{X}}
\newcommand{\e}{\mathcal{E}}
\newcommand{\Win}{\mathsf{Win}}
\newcommand{\WinM}{\mathsf{WinM}}
\newcommand{\WinMM}{\mathsf{WinMM}}

\newcommand{\poly}{\text{poly}}

\usepackage{todonotes}

\definecolor{softpeach}{RGB}{249,200,186}


\urlstyle{same}


\newtheorem{example}{Example}
\newtheorem{theorem}{Theorem}





\pdfinfo{
/TemplateVersion (IJCAI.2026.0)
}

\title{Multi-Property Synthesis}

\author{
Christoph Weinhuber$^{1,}$\thanks{Authors contributed equally.}
\and
Yannik Schnitzer$^{1,*}$
\and \\
Alessandro Abate$^{1}$
\and
David Parker$^{1}$
\and 
Giuseppe De Giacomo$^{1}$
\and
Moshe Y.~Vardi$^{2}$\\
\affiliations
$^{1}$University of Oxford, Oxford, UK\\
$^{2}$Rice University, Houston, Texas, US\\
}

\begin{document}

\maketitle

\begingroup
\renewcommand{\thefootnote}{\arabic{footnote}}
\setcounter{footnote}{0}
\footnotetext[1]{firstname.lastname@cs.ox.ac.uk, \textsuperscript{2} vardi@cs.rice.edu}
\endgroup

\begin{abstract}
We study $\ltlf$ synthesis with multiple properties, where satisfying all properties may be impossible. Instead of enumerating subsets of properties, we compute in one fixed-point computation the relation between product-game states and the goal sets that are realizable from them, and we synthesize strategies achieving maximal realizable sets. We develop a fully symbolic algorithm that introduces Boolean goal variables and exploits monotonicity to represent exponentially many goal combinations compactly. Our approach substantially outperforms enumeration-based baselines, with speedups of up to two orders of magnitude.
\end{abstract}

\section{Introduction}
The problem of synthesis from temporal specifications lies at the core of many decision-making problems in artificial intelligence, most prominently automated planning and controller synthesis. In this setting, we automatically construct an agent strategy that guarantees the satisfaction of a given temporal specification against all admissible behaviours of the environment. Linear Temporal Logic (LTL) has long served as the standard formalism for expressing temporally extended goals and assumptions in infinite-horizon settings, particularly in reactive synthesis~\cite{DBLP:conf/icalp/PnueliR89}.

In contrast, many real-world AI tasks are inherently \emph{finite-horizon}: plans are meant to terminate, tasks have a clear notion of completion, and success is achieved once a desired condition is reached. The finite-trace variant of LTL, denoted $\ltlf$, has emerged as a natural and expressive formalism for specifying goals in such domains. Over the past years, $\ltlf$ synthesis has matured into a standard tool for generating correct-by-construction behaviours for finite-horizon tasks~\cite{DBLP:conf/ijcai/GiacomoV13,DBLP:conf/ijcai/GiacomoV15}. Typical application areas include classical and contingent planning~\cite{DBLP:conf/ijcai/GiacomoR18,DBLP:conf/aips/CamachoBMM18}, service composition~\cite{DBLP:journals/tweb/MontaliPACMS10}, and business process modelling~\cite{DBLP:conf/bpm/PesicA06,DBLP:conf/bpm/GiacomoMGMM14}, where the objective is not to react indefinitely, but to reach a satisfactory outcome and stop.

While existing $\ltlf$ synthesis techniques are well-suited for reasoning about single, monolithic specifications, they are largely constrained to an \emph{all-or-nothing} paradigm: the procedure either successfully synthesises a strategy that satisfies the entire specification, or yields no strategy at all, declaring the problem unsolvable \cite{DuretLutzZPGV25}.

In many realistic scenarios, however, such as robotics, multi-service orchestration, or planning under resource constraints~\cite{DBLP:conf/aaai/BrielNDK04,DBLP:journals/aim/BaierM08}, agents face \emph{over-subscription}~\cite{DBLP:conf/aips/Smith04}. That is, the specification naturally decomposes into multiple properties (or goals), but satisfying all of them simultaneously may be impossible due to conflicting requirements, limited resources, or adversarial environmental choices.

In such settings, the all-or-nothing paradigm is overly restrictive, as failure is rarely binary. Even when the conjunction of all goals is unrealisable, an agent may still be able to guarantee the satisfaction of some meaningful subset of them. 
We study the fundamental question:
If not all goals can be achieved simultaneously, which subsets can, and which of them are most desirable? 
Answering this question is crucial for enabling informed trade-offs, and principled decision-making in over-subscribed domains.

A straightforward way to address this question is to enumerate all subsets of goals and, for each subset, solve the synthesis problem for the conjunction of its elements~\cite{DBLP:conf/sigmod/AgrawalIS93}. This, however, can become infeasible even for a moderate number of properties, as it requires solving an exponential number of synthesis instances. 

We propose a multi-property synthesis technique that reasons about all goal combinations simultaneously. Our approach computes, via a \emph{single} fixed-point computation, the full relation between states and the subsets of goals that can be simultaneously achieved from them. Rather than checking realisability one subset at a time, we compute, in one pass, which combinations of goals can be guaranteed against all admissible environment behaviours.

Crucially, our method is not limited to deciding which subsets of goals \emph{can} be simultaneously achieved; it also synthesises a strategy that achieves them.
The agent interacts with the environment until it reaches a state where the most desirable set of realisable goals has been achieved, at which point it can stop. 
Strategies are constructed not to react indefinitely, but to ensure progress toward satisfying the desired goals and to halt when this objective has been met.

Beyond the fixed-point characterisation, we develop a fully symbolic technique to the multi-property computation.
We leverage Boolean synthesis~\cite{DBLP:conf/cav/FriedTV16} to perform the fixed-point computation in a compact symbolic representation, which is often dramatically more efficient than explicit constructions~\cite{DBLP:conf/ijcai/ZhuTLPV17}.
Notice that, although the problem domain spans all subsets of goals, it is structured by monotonicity -- achieving a set of goals implies achieving any subset of them.
Our symbolic encoding is designed to exploit this structure directly, allowing a single Boolean representation to compactly characterize exponentially many goal combinations without explicitly enumerating them.

In summary, this paper makes the following contributions:
\begin{enumerate}
    \item We introduce a technique for multi-property $\ltlf$ synthesis, capturing the realizability of arbitrary subsets of goals in over-subscribed domains.
    \item We present a fixed-point characterization that computes, in a single construction, all maximal realizable goal sets and the strategies that achieve them.
    \item We develop a symbolic synthesis algorithm that captures the exponentially-many goal combinations in a single compact Boolean form.
    \item We empirically demonstrate that our approach significantly outperforms current state-of-the-art enumeration-based approaches by up to two orders of magnitude.
\end{enumerate}

\section{Related Work}

\paragraph{Multi-property Verification.}
Analysing multiple properties over a fixed model has also been studied in the verification literature.
\citeauthor{DBLP:conf/date/CabodiN11}~(\citeyear{DBLP:conf/date/CabodiN11}) consider checking multiple invariants on the same circuit and show how to reduce repeated work by exploiting shared structure across properties.
Since verification queries are \emph{independent}, one property being true does not come at the expense of another, and the main challenge is amortizing repeated proof effort.
In our synthesis setting, by contrast, properties can \emph{compete}: enforcing one objective may restrict the available behaviors and thereby preclude satisfying another.

In probabilistic verification, multi-objective model checking for Markov decision processes (MDPs) asks whether there exists a strategy that simultaneously meets bounds on the individual probabilities of satisfying multiple properties, and supports trade-off analysis by computing Pareto fronts~\cite{DBLP:conf/tacas/EtessamiKVY07}.
Our multi-property synthesis can be seen as an analogous trade-off analysis in a qualitative, adversarial setting: instead of a continuous Pareto front over probability thresholds, we obtain a \emph{discrete} frontier given by the maximal realizable sets of properties.

\paragraph{Maximum-realizability Synthesis}
\citeauthor{DBLP:conf/atva/DimitrovaGT18}(\citeyear{DBLP:conf/atva/DimitrovaGT18}) study \emph{infinite-trace} LTL synthesis with hard specifications and soft safety requirements, of which potentially not all can be satisfied on a single execution.
They employ a multi-valued quantitative semantics grading how well each soft property is satisfied along infinite executions that meet the hard constraint~\cite{DBLP:journals/acta/DimitrovaGT20}.
Their technique is optimisation-driven: building on bounded synthesis, they search for a controller that maximises this quantitative score within a bound on controller size~\cite{DBLP:conf/atva/DimitrovaGT18}.

In our setting, we are given many $\ltlf$ goals and compute which subsets are simultaneously realizable, together with strategies fulfill them.
We exploit that $\ltlf$ synthesis reduces to DFA reachability games~\cite{DBLP:conf/ijcai/GiacomoV15}, where satisfaction amounts to reaching accepting states.
This yields a direct fixed-point computation with a local acceptance base case, computing in one construction the realizability relation for all states and goal combinations.

\paragraph{Over-subscription and Planning with Preferences.}
Over-subscription and partial-satisfaction planning address settings where limitations prevent fulfilling all goals by utilizing heuristics or utility functions to select valuable subsets~\cite{DBLP:conf/aips/Smith04,DBLP:conf/aaai/BrielNDK04}, or by optimizing user-specified preferences~\cite{DBLP:journals/aim/BaierM08,DBLP:conf/kr/BienvenuFM06,DBLP:conf/ijcai/SohrabiBM09}. 
However, these approaches produce a single plan for \emph{non-adversarial} models. 
In contrast, we synthesise strategies that fulfill goal sets against all admissible environment behaviours.

\section{Preliminaries}
\label{sec:prelims}
In this section, we review the basics of Linear Temporal Logic over finite traces and the standard reduction of synthesis to DFA reachability games.

\subsection{Linear Temporal Logic over finite traces}
Linear Temporal Logic over finite traces ($\ltlf$) shares the same syntax as LTL~\cite{DBLP:conf/focs/Pnueli77}. Given a set of atomic propositions $\ap$, the syntax of $\ltlf$ formulas is defined as:
$$\varphi ::=  \ltltrue \mid  a \in \ap \mid \neg \varphi \mid \varphi \land \varphi \mid \varphi \mid \ltlX \varphi \mid \varphi \ltlU \varphi.$$

A trace is a \emph{finite} sequence $\rho = \rho[0]\rho[1]\cdots \rho[m]$, where each $\rho[i] \in 2^{\ap}$ specifies which atomic propositions hold at time $i$. We write $|\rho| := m+1$ for the length of $\rho$.
We use the satisfaction relation $\rho,i \models \varphi$ to denote that $\varphi$ holds at position $i$ of $\rho$. The satisfaction relation is defined by:
\[
\begin{array}{ll}
\rho, i \models a               \!& \iff a \in \rho[i], \\
\rho, i \models \neg \varphi         \!& \iff \rho, i \not\models \varphi, \\
\rho, i \models \varphi_1 \wedge \varphi_2         \!& \iff \rho, i \models \varphi_1 \text{~and~} \rho, i \models \varphi_2, \\
\rho, i \models \ltlX \varphi         \!& \iff i + 1 < |\rho| \text{~and~} \rho, i + 1 \models \varphi, \\
\rho, i \models \varphi_1 \ltlU \varphi_2         \!& \iff \exists j.~ i \le j < |\rho| \text{~s.t.~} \rho, j \models \varphi_2 \\
\!& \qquad ~~\text{and~} \forall k.~ i \le k < j,~ \rho, k \models \varphi_1.
\end{array} 
\]
A trace $\rho$ satisfies $\varphi$, denoted $\rho \models \varphi$, iff $\rho,0 \models \varphi$.

\subsection{$\ltlf$ Synthesis and DFA Games}
Synthesis for $\ltlf$ properties is the process of computing an agent strategy that enforces a temporal specification against all admissible behaviours of its environment.
We partition the atomic propositions $\ap$ into two disjoint sets: \emph{inputs} $\x$ controlled by the environment and \emph{outputs} $\y$ controlled by the agent.
An execution is a finite trace $\rho$ over $2^{\x \cup \y}$ that is generated as follows: at each time $i$, the agent first chooses a valuation $Y_i \in 2^\y$, after which the environment responds with a valuation $X_i \in 2^{\x}$.
The resulting combined valuation $(X_i,Y_i)$ forms the $i$-th position $\rho[i]$ of the trace.

An \emph{agent strategy} is a function
$g : (2^\x)^* \to 2^\y$
that maps the (potentially empty) history of observed inputs $X_0 X_1 \cdots X_{i-1}$ to the next output choice $Y_{i} = g(X_0 \cdots X_{i-1})$.
The \emph{realizability} problem asks whether there exists such a strategy $g$ such that every finite trace consistent with $g$ satisfies $\varphi$.
The \emph{synthesis} problem additionally requires constructing a witnessing \emph{winning} strategy. Since $\ltlf$ is interpreted over finite traces, a winning strategy must ensure that the play reaches a position $i$ for which the trace $\rho = (X_0 \cup Y_0)\dots(X_i \cup Y_i)$ satisfies $\varphi$ and can then terminate.

\begin{definition}[$\ltlf$ Synthesis]\label{def:ltlf-synthesis}
Let $\varphi$ be an $\ltlf$ formula over atomic propositions $\ap = \x \cup \y$.
Given an infinite input sequence $\xi = X_0 X_1 X_2 \cdots \in (2^\x)^{\omega}$ and a strategy $g$, the induced finite trace $\rho_k(g,\xi)$ up to horizon $k$ is given as:
\[
(g(\epsilon) \cup X_0)(g(X_0) \cup X_1)\cdots (g(X_0\cdots X_{k-1}) \cup X_k ).
\]
We say that $\varphi$ is \emph{realizable} if there exists a strategy $g$ such that for any $\xi \in (2^\x)^{\omega}$ there is some $k\ge 0$ with
$
\rho_k(g,\xi) \models \varphi .
$
The \emph{synthesis} problem is to construct such a winning strategy.
\end{definition}
Notice that, unlike in LTL reactive synthesis over infinite traces, agent and environment do not interact indefinitely: a winning strategy ensures that the interaction generates a finite trace that satisfies $\varphi$ and then chooses to stop at that point.

The standard approach to $\ltlf$ synthesis is to solve the corresponding \emph{DFA game}~\cite{DBLP:conf/ijcai/GiacomoV13}.
Given an $\ltlf$ formula $\varphi$ over $\ap = \x \cup \y$, we can construct a DFA $\aut_\varphi$ over alphabet $2^{\x \cup \y}$ that accepts exactly the finite traces satisfying $\varphi$.
Synthesis then reduces to solving a reachability game played on $\aut_\varphi$, also called the \emph{game arena}.

\begin{definition}[DFA Game Arena]\label{def:dfa-game}
A DFA game arena is a tuple $\arena=(2^{\x \cup \y},S,s_0,\delta,F)$, where $2^{\x \cup \y}$ is the alphabet of the game, $S$ is a finite set of states, $s_0\in S$ is the initial state,
$\delta : S \times 2^{\x \cup \y} \to S$ is the transition function, and $F \subseteq S$ is a set of final states.
\end{definition}

A play of the DFA game proceeds in rounds: at round $i$ the game is in state $s_i \in S$ and the agent chooses an output valuation $Y_i\in 2^{\y}$, then the environment chooses an input valuation $X_i\in 2^\x$, and the game moves to
$s_{i+1} := \delta(s_i, Y_i \cup X_i)$.
The agent wins a play if it reaches final state belonging to $F$, at which point the play can be considered terminated.

Since $\aut_\varphi$ recognizes exactly the finite traces satisfying $\varphi$, reaching a final state $F$ in the automaton is equivalent to having generated a finite prefix that satisfies $\varphi$.
Thus, an agent strategy $g : (2^\x)^* \to 2^\y$ is winning in the DFA game iff, for every infinite input sequence $\xi \in (2^\x)^\omega$, the induced play visits some state in $F$ after finitely many rounds.

To characterize from which states the agent can force a win, we use the standard \emph{controllable predecessor} operator~\cite{DBLP:conf/ijcai/GiacomoV15}:
for a set $\e \subseteq S$, we define
\[
\PreAdv(\e) := \{s\in S \mid \exists Y\in 2^{\y}.\forall X\in 2^{\x}. \delta(s, Y \cup X)\in \e\}.
\]
Intuitively, $\PreAdv(\e)$ contains the states from which the agent has a one-step move that guarantees reaching $\e$ regardless of the environment.
The set of winning states is the least fixed point of repeatedly adding such predecessors, starting from the accepting states:
\[
\Win_0 := F
\quad\text{and}\quad
\Win_{i+1} := \Win_i \cup \PreAdv(\Win_i).
\]
As $S$ is finite, the sequence stabilizes after at most $|S|$ iterations in a fixed point $\Win := \bigcup_{i\ge 0} \Win_i$, yielding a linear-time computation in the number of game states.
The agent has a winning strategy iff $s_0\in \Win$.

Once the winning region $\Win$ has been computed, we can extract a concrete winning strategy by choosing, for each winning state $s\in \Win$, an appropriate agent output $Y \in 2^{\y}$.
Formally, we choose a function $\omega : \Win \to 2^{\y}$ such that for every $s\in \Win_i$ with $i > 0$,
\[
\forall X \in 2^{\x}.\; \delta(s, \omega(s) \cup X ) \in \Win_{i-1}.
\]
Such a choice exists by construction of $\Win$~\cite{DBLP:conf/ijcai/GiacomoV13}. For states $s \in \Win_0 = F$, the agent chooses a fresh \emph{done} action, indicating that it terminates the play.
Winning strategies can be formulated as deterministic finite-state controllers in the standard form of a transducer:
\[
T = (2^{\x \cup \y}, Q, q_0, \eta, \omega),
\]
where $Q := \Win$, $q_0 := s_0$, the output function is the chosen $\omega$, and the transition function is
$
\eta(q,X) := \delta(q, \omega(q) \cup X).
$
For an input sequence $X_0X_1\cdots$, the transducer outputs $Y_i := \omega(q_i)$ and updates $q_{i+1} := \eta(q_i,X_i)$, generating a winning play in the game $G$. The transducer therefore implements a concrete winning strategy.

\section{Multi-property $\ltlf$ Synthesis}
\label{sec:multi-prop}

We lift the standard notion of synthesis from single properties to sets of properties. We aim to construct a single strategy that forces the play to generate a trace satisfying multiple properties against all input behaviors.

\begin{definition}[Multi-property $\ltlf$ Synthesis]\label{def:multigoal-real}
Let $\Phi=\{\varphi_1,\dots,\varphi_n\}$ be set of $n$ $\ltlf$ properties (or \emph{goals}) over the same atomic propositions $\ap=\x \cup \y$.
A goal set $C\subseteq \Phi$ is \emph{realizable} if there exists a strategy
$g:(2^\x)^{*}\to 2^\y$ such that for every input sequence $\xi=X_0X_1\cdots \in (2^\x)^{\omega}$, there is a $k \geq 0$ such that the induced finite trace $\rho_k(g,\xi) \vDash \varphi_i$, forall $\varphi_i \in C$. 
The \emph{multi-property synthesis} problem is to construct such a winning strategy.

\end{definition}

We extend the DFA-game approach to $\ltlf$ synthesis from a single property to a set of properties, by constructing a single arena that tracks progress towards all goals simultaneously.
We consider a fixed set of $n$ $\ltlf$ properties (or \emph{goals}) $\Phi=\{\varphi_1,\dots,\varphi_n\}$ over the same atomic propositions $\ap=\x \cup \y$.
For each goal $\varphi_i \in \Phi$, let
$
\aut_i=(2^{\x \cup \y},\,S_i,\,q_i^0,\,\delta_i,\,F_i)
$
be a DFA recognizing exactly the finite traces satisfying $\varphi_i$.

\begin{samepage}
\begin{definition}[Multi-property Game Arena]\label{def:product-arena}
The \emph{multi-property arena} of $\Phi$ is the product:
\[
\arena^{\times}=(2^{\x\cup \y},\,S^\times,\,s^\times_0,\,\delta^{\times}),
\]
where $S^\times:=\bigtimes_{i=1}^n S_i$ is the set of product states, $s_0^\times:=(q_1^0,\dots,q_n^0)$ is the initial product state, and the transition function is defined componentwise by
\[
\delta^{\times}\bigl((q_1,\dots,q_n),a\bigr)
\;:=\;
\bigl(\delta_1(q_1,a),\dots,\delta_n(q_n,a)\bigr).
\]
\end{definition}
Notice that $\arena^\times$ does not come with a designated set of final states: in the multi-property setting, there is no single notion of completion, since a product state may witness the satisfaction of some goals but not others.
\end{samepage}
The enumeration-based approach fixes a target conjunction for some subset $\Phi' \subseteq \Phi$, which induces a corresponding acceptance condition, and then solves a separate synthesis game for each choice of $\Phi'$, rebuilding product arenas an exponential number of times.
In contrast, since we aim to compute \emph{all} fulfillable subsets simultaneously, we do not restrict $\arena^\times$ to a single acceptance condition.
Instead, in the following we develop a multi-property synthesis procedure that works on the \emph{single} product arena $\arena^\times$ and simultaneously determines, for all $\Phi' \subseteq \Phi$, which goal combinations are fulfillable from each product state and synthesises strategies that fulfill them.

A state $s=(q_1,\dots,q_n)\in S^\times$ consists of the component states of the individual automata, one for each property $\varphi_i$.
Accordingly, a set of goals $C\subseteq \Phi$ is \emph{simultaneously satisfied} at $s$ if all corresponding components are accepting.

\begin{definition}[Multi-Property Satisfaction]\label{def:goals-satisfied}
Let $s=(q_1,\dots,q_n)\in S^\times$ and let $C\subseteq \Phi$.
We write $s \models C$ if for every property $\varphi_i\in C$ we have $q_i \in F_i$.
\end{definition}

A goal set $C\subseteq \Phi$ is realizable from a state $s\in S^\times$ if there exists a strategy
$g:(2^\x)^{*}\to 2^\y$
such that for every input sequence $\xi=X_0X_1\cdots \in (2^\x)^{\omega}$, the play
$s_0:=s$ and $s_{t+1}:=\delta^\times(s_t, g(X_0\cdots X_{t-1}) \cup X_t )$
reaches some $k$ with $s_k \models C$.
At that point the strategy can stop the play, and since each $\aut_i$ recognizes the traces satisfying $\varphi_i$, reaching a product state $s_k \in S^\times$ with $s_k \models C$ means that the induced finite trace is accepted by each automaton $\aut_i$ for $\varphi_i\in C$, hence satisfies all goals in $C$ simultaneously.

\subsection{Solving Multi-property Games}
\label{sec:solving-multi-property-games}

We introduce our approach for computing a \emph{multi-property winning relation} over pairs $(s,C)$, where $s\in S^\times$ and $C\subseteq \Phi$.
This relation characterizes, for each product state $s$, exactly which goal sets $C$ are realizable, and allows us to extract a winning strategy for any such $C$.
As for single properties, it is obtained by a fixed-point computation that lifts winning regions from states to \emph{state--property-set pairs}: the base consists of those $(s,C)$ where all goals in $C$ are already satisfied at $s$, and otherwise the agent must choose an output $Y\in 2^\y$ such that for all $X\in 2^\x$ the agent can enforce realizing all properties in $C$ simultaneously from any successor state. This propagates realizability for all goal sets across all states.

For a relation $\mathcal{E}\subseteq S^\times\times 2^{\Phi}$ we define the \emph{controllable multi-property predecessor} by:
\[
\begin{aligned}
\prem(\mathcal{E}) := \{(s,C)\mid {}&
\exists Y\in 2^{\y}\ \forall X\in 2^{\x}.\\
&(\delta^\times(s, Y \cup X),C)\in\mathcal{E}\}.
\end{aligned}
\]
Intuitively, $\prem(\mathcal{E})$ contains the pairs $(s,C)$ from which the agent has a one-step move that guarantees preserving realizability of the same goal set $C$, regardless of the environment's input choice.
The multi-property winning relation is then obtained as the least fixed point of repeatedly adding such predecessors, starting from the base pairs where $C$ already holds in the current product state:
\[
\begin{aligned}
\WinM_0 &:= \{(s,C)\mid s\models C\},\\
\WinM_{i+1} &:= \WinM_i \cup \prem(\WinM_i).
\end{aligned}
\]
Notice that $\WinM_0$ can be computed in linear time: since checking $s\models C$ only amounts to inspecting which components of a state $s = (q_1, \dots, q_n)$ are accepting $q_i\in F_i$, and thus which goal sets $C$ are already satisfied at $s$.
As $S^\times$ and $\Phi$ are finite, the sequence stabilizes after at most $|S^\times| \cdot 2^{|\Phi|}$ iterations in a fixed point
$\WinM := \bigcup_{i\ge 0} \WinM_i$.

\begin{theorem}\label{thm:WM-correct}
The fixed point $\WinM$ characterizes multi-property realizability: for all $s\in S^\times$ and $C\subseteq \Phi$,
\[
(s,C)\in \WinM \quad \Longleftrightarrow \quad C \text{ is realizable from } s.
\]
\end{theorem}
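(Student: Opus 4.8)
The plan is to prove both directions of the biconditional separately, treating $\WinM$ as the least fixed point of the monotone operator $\e \mapsto \WinM_0 \cup \prem(\e)$. For the forward direction ($\Rightarrow$), I would proceed by induction on the stage index $i$ at which a pair enters the relation, i.e. on the least $i$ with $(s,C) \in \WinM_i$. For the converse direction ($\Leftarrow$), I would assume a witnessing strategy $g$ exists and show membership in $\WinM$ by reasoning about the bounded number of rounds the strategy needs to reach a satisfying product state.

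For the forward direction, the base case is $i=0$: if $(s,C) \in \WinM_0$ then $s \models C$, so the empty-history strategy that immediately stops is winning, and $C$ is realizable from $s$ trivially with $k=0$. For the inductive step, suppose the claim holds for all pairs entering at stage $i$ and let $(s,C) \in \WinM_{i+1} \setminus \WinM_i$. Then $(s,C) \in \prem(\WinM_i)$, so there is an output $Y \in 2^\y$ such that for every $X \in 2^\x$ we have $(\delta^\times(s, Y \cup X), C) \in \WinM_i$. By the induction hypothesis, $C$ is realizable from each such successor via some strategy; I would stitch these together by defining a strategy from $s$ that plays $Y$ on the empty history and thereafter, having observed the first input $X_0$, follows the witnessing strategy for the successor $\delta^\times(s, Y \cup X_0)$ on the shifted history. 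Since each successor strategy guarantees reaching a $C$-satisfying state in finitely many rounds, the combined strategy does too, establishing realizability from $s$.

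For the converse, suppose $C$ is realizable from $s$ via a strategy $g$. I would argue that $(s,C) \in \WinM$ by establishing a rank bound: define the rank of a state $s'$ (relative to $C$) as the least number of rounds within which $g$, started from $s'$ on the appropriate sub-history, forces a $C$-satisfying state against every input continuation; finiteness of this rank follows from realizability together with $K\"onig's$ lemma or a direct uniform-bound argument over the finite arena. I would then show by induction on this rank that $(s',C) \in \WinM$: rank $0$ means $s' \models C$, giving $(s',C) \in \WinM_0$; for positive rank, the move $Y = g(\epsilon)$ (on the relevant history) drives every successor to strictly smaller rank, so by induction each successor pair lies in $\WinM$, and the $\prem$ step places $(s',C)$ in $\WinM$ as well.

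The main obstacle I anticipate is making the convergence argument in the converse direction fully rigorous, specifically obtaining a \emph{uniform} finite bound on the number of rounds needed across all environment inputs. Definition~\ref{def:multigoal-real} only guarantees, for each individual input sequence $\xi$, the existence of some horizon $k$ depending on $\xi$; promoting this to a single $k$ that works for all $\xi$ requires a compactness argument. The cleanest route is to note that the game arena $\arena^\times$ is finite, so the set of inputs consistent with $g$ that have not yet reached $C$ forms a finitely-branching tree with no infinite path (by realizability), whence K\"onig's lemma yields a uniform finite depth; this bound is exactly the rank used to drive the induction, and care must be taken that the rank strictly decreases along every admissible successor so that the $\prem$ closure correctly captures it.
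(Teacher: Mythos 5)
Your proposal is correct and follows essentially the same route as the paper: the paper proves a lemma characterizing $\WinM_i$ as exactly the pairs $(s,C)$ from which $C$ is $i$-step realizable, by induction on $i$ in both directions, which matches your stage induction for ($\Rightarrow$) and your rank induction for ($\Leftarrow$). The one point where you are more careful is the completeness direction: the paper simply asserts that realizability yields a uniform finite horizon, whereas you correctly identify that promoting the per-input-sequence horizon of Definition~\ref{def:multigoal-real} to a uniform bound requires a K\"onig's-lemma/compactness argument over the finitely branching tree of plays.
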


\begin{figure*}[t]
  \centering
    \resizebox{0.93\textwidth}{!}{
  \begin{subfigure}[t]{0.29\textwidth}
    \centering
    \resizebox{\linewidth}{!}{%
      \begin{tikzpicture}[->, thick, >=latex, scale=0.6, transform shape]
        \path[use as bounding box] (-5.4,-2) rectangle (4.4,7.6);

        \tikzstyle{state}=[circle, draw, minimum size=1.2cm, align=center,
          fill=lightred, fill opacity=0.8, text opacity=1]
        \tikzstyle{synth}=[draw, rectangle, minimum size=1.05cm, align=center,
          fill=BrickRed, fill opacity=0.455, text opacity=1]
        \tikzstyle{edgelab}=[midway, fill=white, inner sep=2pt, text opacity=1]

        \node[state] (s0) at (0,0) {\huge$\boldsymbol{s_0}$};

        \node[synth] (qL) at (-2.2,2.9) {};
        \node[synth] (qR) at ( 2.2,2.9) {};

        \node[state] (s1) at (-3.9,5.8) {\huge$\boldsymbol{s_1}$};
        \node[state] (s2) at (-0.5,5.8) {\huge$\boldsymbol{s_2}$};

        \node[state] (s3) at (3.2,5.8) {\huge$\boldsymbol{s_3}$};

        \node[font=\Large, align=center] at ($(s1.north)+(-0.75,0.9)$)
          {$\boldsymbol{\{\{\varphi_1,\varphi_2,\varphi_3\},\{\varphi_1,\varphi_2\},}$\\
           $\boldsymbol{\{\varphi_2,\varphi_3\},\dots,\emptyset\}}$};

        \node[font=\Large, align=center] at ($(s2.north)+(0.25,0.9)$)
          {$\boldsymbol{\{\{\varphi_1,\varphi_2\},\{\varphi_1\},}$\\
           $\boldsymbol{\{\varphi_2\},\emptyset\}}$};

        \node[font=\Large, align=center] at ($(s3.north)+(0.25,0.9)$)
          {$\boldsymbol{\{\{\varphi_2,\varphi_3\},\{\varphi_2\},}$\\
           $\boldsymbol{\{\varphi_3\},\emptyset\}}$};

        \node[font=\Large, anchor=north east, text=darkgray, align=center]
          at ($(qL.south west)+(+0.15,-0.10)$)
          {$\boldsymbol{\{\{\varphi_1,\varphi_2\},\{\varphi_1\},}$\\
           $\boldsymbol{\{\varphi_2\},\emptyset\}}$};

        \node[font=\Large, anchor=north west, text=darkgray, align=center]
          at ($(qR.south east)+(-0.15,-0.10)$)
          {$\boldsymbol{\{\{\varphi_2,\varphi_3\},\{\varphi_2\},}$\\
           $\boldsymbol{\{\varphi_3\},\emptyset\}}$};

        \node[font=\Large, align=center] at ($(s0.south)+(0,-0.8)$)
          {$\boldsymbol{\{\{\varphi_1,\varphi_2\},\{\varphi_2,\varphi_3\},}$\\
           $\boldsymbol{\{\varphi_1\},\dots,\emptyset\}}$};

        \path (s0) edge[bend left=15]  node[edgelab] {\huge$y_1$} (qL);
        \path (s0) edge[bend right=15] node[edgelab] {\huge$y_2$} (qR);

        \path (qL) edge[bend left=15]  node[edgelab] {\huge$x_1$} (s1);
        \path (qL) edge[bend right=15] node[edgelab] {\huge$x_2$} (s2);

        \path (qR) edge node[edgelab] {\huge$x_1,x_2$} (s3);
      \end{tikzpicture}%
    }
    \caption{Multi-Property.}
    \label{fig:synthgame-a}
  \end{subfigure}\hspace{0.04\textwidth}
  \begin{subfigure}[t]{0.29\textwidth}
    \centering
    \resizebox{\linewidth}{!}{%
      \begin{tikzpicture}[->, thick, >=latex, scale=0.6, transform shape]
        \path[use as bounding box] (-5.4,-2) rectangle (4.4,7.6);

        \tikzstyle{state}=[circle, draw, minimum size=1.2cm, align=center,
          fill=lightred, fill opacity=0.8, text opacity=1]
        \tikzstyle{synth}=[draw, rectangle, minimum size=1cm, align=center,
          fill=BrickRed, fill opacity=0.455, text opacity=1]
        \tikzstyle{edgelab}=[midway, fill=white, inner sep=2pt, text opacity=1]

        \node[state] (s0) at (0,0) {\huge$\boldsymbol{s_0}$};

        \node[synth] (qL) at (-2.2,2.9) {};
        \node[synth] (qR) at ( 2.2,2.9) {};

        \node[state] (s1) at (-3.9,5.8) {\huge$\boldsymbol{s_1}$};
        \node[state] (s2) at (-0.5,5.8) {\huge$\boldsymbol{s_2}$};

        \node[state] (s3) at (3.2,5.8) {\huge$\boldsymbol{s_3}$};

        \node[font=\LARGE] at ($(s1.north)+(0,0.55)$) {$\boldsymbol{\{\varphi_1,\varphi_2,\varphi_3\}}$};
        \node[font=\LARGE] at ($(s2.north)+(0,0.55)$) {$\boldsymbol{\{\varphi_1,\varphi_2\}}$};
        \node[font=\LARGE] at ($(s3.north)+(0,0.55)$) {$\boldsymbol{\{\varphi_2,\varphi_3\}}$};

        \node[font=\LARGE, anchor=north east, text=darkgray] at ($(qL.south west)+(+0.05,-0.10)$) {$\boldsymbol{\{\varphi_1,\varphi_2\}}$};
        \node[font=\LARGE, anchor=north west, text=darkgray] at ($(qR.south east)+( -0.05,-0.10)$) {$\boldsymbol{\{\varphi_2,\varphi_3\}}$};

        \node[font=\LARGE] at ($(s0.south)+(0,-0.65)$) {$\boldsymbol{\{\{\varphi_1,\varphi_2\},\{\varphi_2,\varphi_3\}\}}$};

        \path (s0) edge[bend left=15]  node[edgelab] {\huge$y_1$} (qL);
        \path (s0) edge[bend right=15] node[edgelab] {\huge$y_2$} (qR);

        \path (qL) edge[bend left=15]  node[edgelab] {\huge$x_1$} (s1);
        \path (qL) edge[bend right=15] node[edgelab] {\huge$x_2$} (s2);

        \path (qR) edge node[edgelab] {\huge$x_1,x_2$} (s3);
      \end{tikzpicture}%
    }
    \caption{Maximal Multi-Property.}
    \label{fig:synthgame-b}
  \end{subfigure}\hspace{0.04\textwidth}
  \begin{subfigure}[t]{0.29\textwidth}
    \centering
    \resizebox{\linewidth}{!}{%
      \begin{tikzpicture}[->, thick, >=latex, scale=0.6, transform shape]
        \path[use as bounding box] (-5.4,-2) rectangle (4.4,7.6);

        \tikzstyle{state}=[circle, draw, minimum size=1.2cm, align=center,
          fill=lightred, fill opacity=0.8, text opacity=1]
        \tikzstyle{synth}=[draw, rectangle, minimum size=1.05cm, align=center,
          fill=BrickRed, fill opacity=0.455, text opacity=1]
        \tikzstyle{edgelab}=[midway, fill=white, inner sep=2pt, text opacity=1]

        \node[state] (s0) at (0,0) {\huge$\boldsymbol{s_0}$};

        \node[synth] (qL) at (-2.2,2.9) {};
        \node[synth] (qR) at ( 2.2,2.9) {};

        \node[state] (s1) at (-3.9,5.8) {\huge$\boldsymbol{s_1}$};
        \node[state] (s2) at (-0.5,5.8) {\huge$\boldsymbol{s_2}$};

        \node[state] (s3) at (3.2,5.8) {\huge$\boldsymbol{s_3}$};

        \node[font=\LARGE] at ($(s1.north)+(0,0.55)$) {$\boldsymbol{\top}$};
        \node[font=\LARGE] at ($(s2.north)+(0,0.55)$) {$\boldsymbol{\neg k_3}$};
        \node[font=\LARGE] at ($(s3.north)+(0,0.55)$) {$\boldsymbol{\neg k_1}$};

        \node[font=\LARGE, anchor=north east, text= darkgray] at ($(qL.south west)+(-0.15,-0.10)$) {$\boldsymbol{\neg k_3}$};
        \node[font=\LARGE, anchor=north west, text= darkgray] at ($(qR.south east)+( 0.15,-0.10)$) {$\boldsymbol{\neg k_1}$};

        \node[font=\LARGE] at ($(s0.south)+(0,-0.65)$) {$\boldsymbol{\neg k_1 \,\lor\, \neg k_3}$};

        \path (s0) edge[bend left=15]  node[edgelab] {\huge$y_1$} (qL);
        \path (s0) edge[bend right=15] node[edgelab] {\huge$y_2$} (qR);

        \path (qL) edge[bend left=15]  node[edgelab] {\huge$x_1$} (s1);
        \path (qL) edge[bend right=15] node[edgelab] {\huge$x_2$} (s2);

        \path (qR) edge node[edgelab] {\huge$x_1,x_2$} (s3);
      \end{tikzpicture}%
    }
    \caption{Symbolic.}
    \label{fig:synthgame-c}
  \end{subfigure}
  }

  \caption{Illustration of our synthesis procedures on a problem with $2^\y=\{y_1,y_2\}$, $2^\x=\{x_1,x_2\}$, and properties $\Phi=\{\varphi_1,\varphi_2,\varphi_3\}$.
(a) and (b) depict the resulting winning relation $\WinM$ projected onto states: each state is annotated with the goal sets that are realizable from it.
(a) shows the explicit multi-property fixed point that tracks all realizable goal sets (Section~\ref{sec:solving-multi-property-games}) and 
(b) shows the maximal variant (Section~\ref{sec:maximal_multi-prop}).
(c) illustrates the symbolic variant (Section~\ref{sec:symbolic}), where the winning relation is represented compactly as a Boolean formula.}

  \label{fig:synthgame-3panel}
\end{figure*}
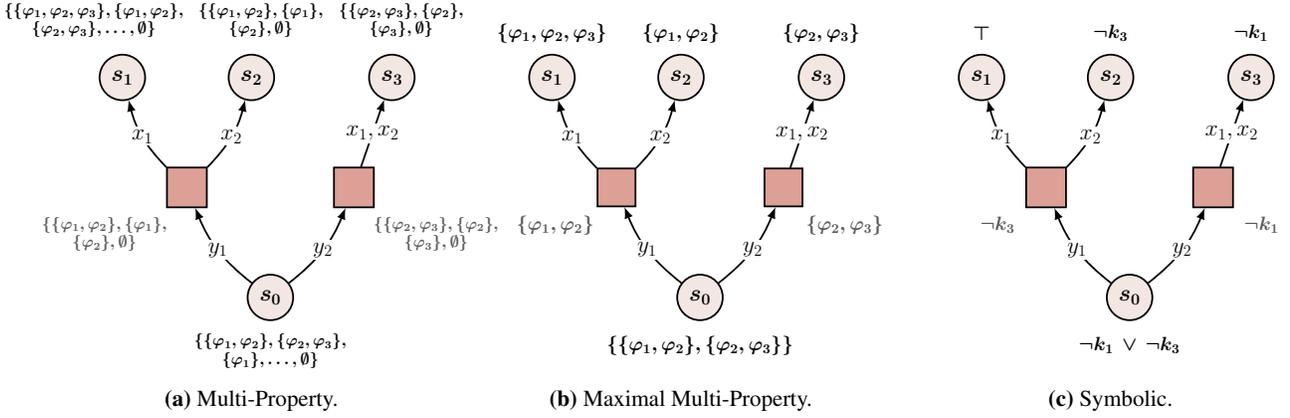

The relation $\WinM$ generalizes the traditional winning region: for any goal set $C\subseteq \Phi$, there exists a winning strategy for fulfilling $C$ iff $(s_0^\times,C)\in \WinM$.
By projecting the winning relation on to the desired set of properties $C$:
\[
\WinM_i|_{C} := \{\, s\in S^\times \mid (s,C)\in \WinM_i\,\}.
\]
We can construct the corresponding winning strategy as a transducer $T_C$, just as for the single property case discussed in Section~\ref{sec:prelims}: 
choose $\omega_C:\WinM|_C\to 2^{\y}$ such that for every $s\in \WinM_i|_C$ with $i>0$,
\[
\forall X\in 2^{\x}.\;\delta^\times(s,\omega_C(s) \cup X)\in \WinM_{i-1}|_C,
\]
and for $s\in \WinM_0|_C=\{s\mid s\models C\}$ output a designated \emph{done} action and terminate. Such a choice exists by construction of $\WinM$ and the definition of $\prem$.

\begin{theorem}\label{thm:multistrat-correct}
For any $C\subseteq \Phi$ with $(s_0^\times,C)\in \Win^{M}$, a transducer $T_C$ obtained by the construction above exists and implements a winning strategy for realizing $C$: for every input sequence $\xi\in(2^\x)^\omega$, the play induced by $T_C$ reaches a state $s \in S^\times$ with $s\models C$ after finitely many steps, and the generated finite trace satisfies every $\varphi_i\in C$.
\end{theorem}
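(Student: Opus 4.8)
The plan is to verify the two claims in turn: first that the construction of $\omega_C$ (and hence of $T_C$) is well-defined, and then that the induced transducer forces the play into a state satisfying $C$ within a bounded number of rounds. Both follow from a single well-foundedness argument based on the stage at which a state enters the winning relation. Concretely, for each $s\in\WinM|_C$ I would define its \emph{rank} $r(s):=\min\{i\mid s\in\WinM_i|_C\}$, which is well-defined and finite precisely because $s\in\WinM|_C=\bigcup_{i\ge 0}\WinM_i|_C$.

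For the existence claim, I would argue as follows. If $r(s)=0$ then $s\models C$ and the transducer outputs the designated \emph{done} action, so no output choice is required. If $r(s)=i>0$, then $(s,C)\in\WinM_i\setminus\WinM_{i-1}$, and since $\WinM_i=\WinM_{i-1}\cup\prem(\WinM_{i-1})$ we must have $(s,C)\in\prem(\WinM_{i-1})$. Unfolding the definition of $\prem$ yields a witness $Y\in 2^\y$ with $\delta^\times(s,Y\cup X)\in\WinM_{i-1}|_C$ for every $X\in 2^\x$; setting $\omega_C(s):=Y$ meets the defining requirement. This simultaneously shows that $T_C$ is well-defined and that its transition function maps $\WinM|_C$ into itself, so the state space $Q:=\WinM|_C$ is closed under $\eta$ and the transducer is a legitimate object.

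The core of the correctness claim is then a strictly-decreasing-rank argument. Fix any input sequence $\xi=X_0X_1\cdots$ and consider the induced play $s_0:=s_0^\times$, $s_{t+1}:=\delta^\times(s_t,\omega_C(s_t)\cup X_t)$. By the choice of $\omega_C$, whenever $r(s_t)=i>0$ we have $s_{t+1}\in\WinM_{i-1}|_C$, hence $r(s_{t+1})\le i-1<r(s_t)$, \emph{regardless of} the environment move $X_t$. Since ranks are non-negative integers, the play must reach a state $s_k$ with $r(s_k)=0$, i.e.\ $s_k\models C$, after at most $r(s_0^\times)$ rounds; finiteness of this bound is what guarantees the agent can stop. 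Finally, I would translate this back to trace satisfaction: the play is exactly the run of $\arena^\times$ on the finite trace $(\omega_C(s_0)\cup X_0)\cdots(\omega_C(s_{k-1})\cup X_{k-1})$, so $s_k\models C$ means each component automaton $\aut_i$ with $\varphi_i\in C$ sits in an accepting state of $F_i$; since $\aut_i$ recognizes exactly the traces satisfying $\varphi_i$, this prefix satisfies every $\varphi_i\in C$. I do not expect a genuine obstacle here — the argument is the standard reachability-attractor reasoning — and the only points demanding care are bookkeeping ones: confirming closure of $Q$ under $\eta$ (so termination never strands the play outside the winning region) and aligning the round index of the play with the length of the generated trace in the final satisfaction step.
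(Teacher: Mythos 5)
Your proof is correct, but it takes a more self-contained route than the paper. The paper's proof proceeds by reduction: it observes that the projection $\WinM|_C$ satisfies exactly the single-property fixed-point recurrence on the arena $\arena^\times$ with final states $F_C=\{s\mid s\models C\}$, and then delegates both the existence of $\omega_C$ and the correctness of $T_C$ to the known correctness of single-property DFA-game strategy extraction (Theorem~4 of De Giacomo and Vardi, 2015). You instead unfold that delegated argument directly on the multi-property relation: you introduce the rank $r(s)=\min\{i\mid s\in\WinM_i|_C\}$, extract the witness output from $(s,C)\in\prem(\WinM_{r(s)-1})$, and run the standard strictly-decreasing-rank attractor argument to bound the number of rounds by $r(s_0^\times)$, before translating acceptance of each component automaton back into satisfaction of each $\varphi_i\in C$. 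The two buy different things: the paper's reduction is shorter and makes transparent that for a fixed $C$ the multi-property game is just an ordinary reachability game; your version is self-contained, and it additionally makes explicit why the choice of $\omega_C$ exists and why $\WinM|_C$ is closed under $\eta$ --- facts the paper only asserts ``by construction.'' Your closing caveat about aligning round indices with trace length is apt (the $k=0$ case, where $s_0^\times\models C$ and the generated trace is empty, is a standard $\ltlf$ bookkeeping subtlety that the paper also glosses over), but it does not affect the substance of the argument.
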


\subsection{Maximal Multi-Property Synthesis}
\label{sec:maximal_multi-prop}
The construction above avoids explicitly solving a separate synthesis instance for each $C\subseteq \Phi$, but the relation $\WinM\subseteq S^\times\times 2^\Phi$ may still be large: in the worst case, a single state can be related to exponentially many sets of properties. Nevertheless, multi-property realizability has an inherent \emph{monotonicity}: if $(s,C)\in \WinM$, then $(s,D)\in \WinM$ for every $D\subseteq C$.
Hence, many elements of $\WinM$ are redundant and subsumed by larger realizable sets.
Therefore, it is natural to maintain, for each state $s$, only the \emph{maximal} realizable sets. 

We capture this compression with an operator that removes all pairs subsumed by a strictly larger set of properties at the same state.
For a relation $\mathcal{E}\subseteq S^\times \times 2^{\Phi}$, we define
\[
\maxc(\mathcal{E})
\,:=\,
\{ (s,C)\in \mathcal{E} \mid \neg \exists (s,D)\in \mathcal{E}\ \text{with}\ C \subset D \}.
\]
Thus, $\maxc(\mathcal{E})$ retains exactly those pairs whose set of properties $C$ set is maximal with respect to $\subseteq$ among all sets associated with the same state.
In the following, we apply $\maxc$ within the fixed-point computation so that only maximal realizable sets are maintained throughout, while all smaller realizable sets remain implicit. 
Once, however, we keep only maximal sets, the predecessor step must not require \emph{the same} set $C$ to be stored explicitly for every successor. Instead, it is sufficient that each successor state admits \emph{some} realizable set $D$ that \emph{covers} $C$, i.e., $C\subseteq D$, because realizability of $D$ implies realizability of all its subsets. Accordingly, we define the \emph{maximal multi-property predecessor} for $\mathcal{E}\subseteq S^\times\times 2^\Phi$ by
\[
\begin{aligned}
\premm(\mathcal{E}) := \{(s,C)\mid {}&
\exists Y\in 2^{\y}\ \forall X\in 2^{\x}\ \exists D\supseteq C.\\
&(\delta^\times(s,Y \cup X),D)\in \mathcal{E}\ \}.
\end{aligned}
\]
Intuitively, $\premm(\mathcal{E})$ contains exactly those pairs $(s,C)$ from which the agent can, for any environment input, force a step to a successor state where it can enforce some $D\supseteq C$, and therefore can also enforce $C$.

Instead of constructing $\WinM$ and pruning redundant goal sets afterwards, we maintain throughout only those pairs whose property set is maximal for the corresponding state.
This yields the following \emph{maximal} fixed-point iteration:
\[
\begin{aligned}
\WinM_0 &:= \maxc(\{(s,C)\mid s\models C\}),\\
\WinM_{i+1} &:= \maxc\bigl(\WinM_i \cup \premm(\WinM_i)\bigr),
\end{aligned}
\]
and let $\WinM := \bigcup_{i\ge 0}\WinM_i$ denote the resulting fixed point.
In contrast to the construction in Section~\ref{sec:solving-multi-property-games}, which may accumulate many dominated pairs, the iteration above ensures that dominated pairs are eliminated on the fly while all smaller realizable sets remain implicit by monotonicity.

\begin{theorem}\label{thm:WinM-max-correct}
For all $s\in S^\times$ and $C\subseteq \Phi$,
\[
\begin{aligned}
(s,C)\in \WinM
\;\Longleftrightarrow\;&\ 
C \text{ is realizable from } s \;
\land \\ &\neg\exists D \supset C.\; D \text{ is realizable from } s .
\end{aligned}
\]
The iteration $\WinM_i$ stabilizes after at most $|S^\times| \cdot 2^{|\Phi|}$ steps.
\end{theorem}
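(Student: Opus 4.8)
The plan is to reduce Theorem~\ref{thm:WinM-max-correct} to Theorem~\ref{thm:WM-correct} by showing that the maximal iteration computes, at every stage, exactly the $\subseteq$-maximal pairs of the non-maximal relation. To keep the two objects apart I write $\WinM$ for the relation of Theorem~\ref{thm:WM-correct} (the non-maximal fixed point of Section~\ref{sec:solving-multi-property-games}) and $\WinMM$ for the fixed point of this section, reading the displayed union as the stabilized value of the iteration, since the maximal iteration is not monotone and $\maxc$ may discard previously added pairs. For a relation $\mathcal{E}\subseteq S^\times\times 2^\Phi$ let $\downcl\mathcal{E} := \{(s,C)\mid \exists D\supseteq C.\ (s,D)\in\mathcal{E}\}$ be its per-state downward closure.

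First I would record three elementary facts. (i) $\premm(\mathcal{E}) = \prem(\downcl\mathcal{E})$: unfolding the definitions, $(s,C)\in\premm(\mathcal{E})$ asks for some $Y$ with, for all $X$, some $D\supseteq C$ satisfying $(\delta^\times(s,Y\cup X),D)\in\mathcal{E}$, which is precisely $(\delta^\times(s,Y\cup X),C)\in\downcl\mathcal{E}$. (ii) Over the finite lattice $2^\Phi$, $\maxc$ sees only the downward closure: $\maxc(\mathcal{E}) = \maxc(\downcl\mathcal{E})$ and $\downcl\maxc(\mathcal{E}) = \downcl\mathcal{E}$, because every pair of $\mathcal{E}$ lies below some $\subseteq$-maximal pair. (iii) Each non-maximal iterate is downward closed: $\WinM_0 = \{(s,C)\mid s\models C\}$ is downward closed since dropping goals only removes acceptance requirements, and $\prem$ preserves downward closure (if the agent can force $C$ in one step it can force any $D\subseteq C$), so $\WinM_{i+1}=\WinM_i\cup\prem(\WinM_i)$ stays downward closed and $\prem(\WinM_i)$ is itself downward closed.

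The core step is the invariant $\downcl\WinMM_i = \WinM_i$ for all $i$, proved by induction. The base case $\downcl\maxc(\WinM_0)=\WinM_0$ is immediate from (ii) and downward closure of $\WinM_0$. For the step, applying (ii) to strip the outer $\maxc$ and distributing $\downcl$ over the union gives $\downcl\WinMM_{i+1} = \downcl\WinMM_i \cup \downcl\premm(\WinMM_i)$; the induction hypothesis rewrites the first term as $\WinM_i$, while (i) together with the hypothesis rewrites the second as $\downcl\prem(\WinM_i)=\prem(\WinM_i)$ (using $\downcl\prem(\WinM_i)=\prem(\WinM_i)$ by (iii)). The right-hand side is therefore exactly $\WinM_i \cup \prem(\WinM_i) = \WinM_{i+1}$, closing the induction.

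Finally I would cash the invariant out. An antichain equals the $\subseteq$-maximal elements of its own downward closure, so $\WinMM_i = \maxc(\WinM_i)$ for every $i$. The non-maximal iteration stabilizes within $|S^\times|\cdot 2^{|\Phi|}$ steps, say at index $N$; then $\WinMM_N$ and $\WinMM_{N+1}$ are antichains whose downward closures, by the invariant, equal $\WinM_N = \WinM_{N+1}$, hence $\WinMM_N = \WinMM_{N+1}$. Thus the maximal iteration stabilizes at the same index $N$, giving the stated bound, and its limit is $\WinMM = \maxc(\WinM)$. Substituting Theorem~\ref{thm:WM-correct}, namely $\WinM = \{(s,C)\mid C\text{ realizable from }s\}$, the membership $(s,C)\in\maxc(\WinM)$ states exactly that $C$ is realizable from $s$ while no strict superset $D\supset C$ is, which is the claimed biconditional. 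The main obstacle is precisely the non-monotonicity of the maximal iteration: since $\maxc$ can delete previously added pairs, there is no growing chain whose union one can take, and the whole argument must instead be carried by the monotone shadow $\downcl\WinMM_i = \WinM_i$, exploiting that $\maxc$ depends only on this shadow.
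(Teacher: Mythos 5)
Your proof is correct and follows essentially the same route as the paper's: both arguments hinge on the downward-closure identities $\downcl\maxc(\mathcal{E})=\downcl\mathcal{E}$ and $\downcl\premm(\mathcal{E})=\prem(\downcl\mathcal{E})$, and both establish the invariant that the downward closure of the maximal iterate coincides with the non-maximal iterate $\WinM_i$, reducing the claim to Theorem~\ref{thm:WM-correct}. Your treatment is somewhat more careful than the paper's on two points -- reading the limit as the stabilized value rather than a literal union (since $\maxc$ makes the iteration non-monotone) and spelling out why an antichain is recovered from its downward closure -- but these are refinements of the same argument, not a different one.
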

$\WinM$ relates each state $s$ to the $\subseteq$-maximal realizable sets of properties.
For any realizable set of properties $C\subseteq \Phi$ with $(s_0^\times,C)\in \WinM$, we can construct a winning strategy for fulfilling $C$ as a finite transducer by projecting $\WinM$ onto $C$, exactly as in the non-maximal construction in Section~\ref{sec:solving-multi-property-games}.

\begin{example}
Figure~\ref{fig:synthgame-3panel} contrasts the multi-property fixed point 
with the maximal variant. 
Each state $s$ is annotated with the sets $C$ such that $(s,C)\in\WinM$.
In the multi-property fixed point~(Fig.~\ref{fig:synthgame-a}), $\WinM$ contains all realizable sets of properties.
Here, $s_1$ satisfies all goals and is therefore associated with every subset of $\{\varphi_1,\varphi_2,\varphi_3\}$, while $s_2$ (resp.\ $s_3$) is associated with all subsets of $\{\varphi_1,\varphi_2\}$ (resp.\ $\{\varphi_2,\varphi_3\}$).
At state $s_0$, output $y_1$ leads to $s_1$ or $s_2$, so only the subsets common to both successors, i.e., all subsets of $\{\varphi_1,\varphi_2\}$, are realizable; output $y_2$ leads to $s_3$ under any environment input, yielding all subsets of $\{\varphi_2,\varphi_3\}$.
Thus, $s_0$ is associated with exactly the union of these two sets.

In the maximal variant~(Fig.~\ref{fig:synthgame-b}), $\WinM$ only contains the maximal realizable sets, leaving smaller subsets implicit.
Accordingly, the sets reduce to, e.g,
$\bigl\{\{\varphi_1,\varphi_2\},\{\varphi_2,\varphi_3\}\bigr\}$ at $s_0$.
\end{example}

Our fixed-point construction avoids explicitly solving a separate synthesis game for each $C\subseteq\Phi$, but, in the worst case, our algorithm may still need to track an exponential number of goal sets for each state.
The maximal variant reduces this by retaining only the maximal goal sets. Nevertheless, the number of maximal sets can, in the worst case, still be exponential in the number of properties, as the number of $\subseteq$-maximal subsets of a set $\Phi$ with $|\Phi| = n$ is at most $\binom{n}{\lfloor n/2 \rfloor} \in \Theta(\frac{2^n}{\sqrt{n}})$~\cite{Stirling1730,Sperner28}.

\begin{theorem}\label{thm:complexity}
Multi-property $\ltlf$ realizability and synthesis are \textsc{2EXPTIME}-complete.
\end{theorem}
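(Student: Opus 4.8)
The plan is to prove 2EXPTIME-completeness by establishing the two bounds separately, leveraging the well-known complexity of ordinary $\ltlf$ synthesis as the reference point.

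**For the upper bound**, I would argue that the multi-property problem reduces to solving a single reachability game whose arena is the product $\arena^\times$ (Definition~\ref{def:product-arena}) together with the fixed-point relation $\WinM$ from Section~\ref{sec:solving-multi-property-games}. The key observation is a size analysis: each individual DFA $\aut_i$ recognizing $\varphi_i$ has at most doubly-exponentially many states in $|\varphi_i|$, by the standard $\ltlf$-to-DFA construction~\cite{DBLP:conf/ijcai/GiacomoV13,DBLP:conf/ijcai/GiacomoV15}. The product $S^\times = \bigtimes_{i=1}^n S_i$ therefore has size at most doubly-exponential in $\sum_i |\varphi_i|$, i.e.\ in the total input size. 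By Theorem~\ref{thm:WM-correct}, realizability of any $C$ from $s_0^\times$ is decided by computing $\WinM$, and the excerpt already notes that the fixed-point stabilizes after at most $|S^\times|\cdot 2^{|\Phi|}$ iterations. Since $2^{|\Phi|} = 2^n$ is at most exponential (hence dominated by the doubly-exponential $|S^\times|$) and each iteration does polynomial work in $|S^\times|\cdot 2^{|\Phi|}$, the whole computation runs in time doubly-exponential in the input, giving the \textsc{2EXPTIME} upper bound.

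**For the lower bound**, I would give a reduction \emph{from} ordinary single-property $\ltlf$ synthesis, which is already known to be \textsc{2EXPTIME}-hard~\cite{DBLP:conf/ijcai/GiacomoV13}. This direction is essentially trivial: a single $\ltlf$ formula $\varphi$ is realizable in the classical sense (Definition~\ref{def:ltlf-synthesis}) if and only if the singleton goal set $\{\varphi\}$ is realizable from $s_0^\times$ in the multi-property sense (Definition~\ref{def:multigoal-real}), taking $\Phi=\{\varphi\}$. Thus any algorithm for multi-property realizability solves classical $\ltlf$ realizability, inheriting its hardness.

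**The main subtlety**, rather than an obstacle, is ensuring the upper-bound bookkeeping is tight: one must confirm that the factor $2^{|\Phi|}$ contributed by ranging over goal subsets does not push the bound beyond \textsc{2EXPTIME}. This is immediate once we note that $|\Phi|=n$ is linear in the input, so $2^{|\Phi|}$ is merely single-exponential and is absorbed by the doubly-exponential blow-up of the DFA product; the number of subsets never dominates the automaton-theoretic cost. With both bounds in hand, 2EXPTIME-completeness follows, and I would remark that the maximal variant of Section~\ref{sec:maximal_multi-prop} enjoys the same complexity, since it only prunes $\WinM$ and by Theorem~\ref{thm:WinM-max-correct} computes a sub-relation decided within the same resource bounds.
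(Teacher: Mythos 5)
Your proposal is correct and follows essentially the same route as the paper's proof: membership via the doubly-exponential DFA/product construction dominating the singly-exponential $2^{|\Phi|}$ factor in the fixed-point computation, and hardness via the trivial reduction from single-property $\ltlf$ synthesis with $\Phi=\{\varphi\}$. The only cosmetic difference is that the paper spells out the product bound as $(2^{2^{\poly(m)}})^n \le 2^{2^{\poly(m)}}$ explicitly, which you state but do not compute.
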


\noindent
This implies that neither our approach nor enumeration-based baselines can improve the worst-case complexity of the problem. On the other hand, our method shares the expensive game construction across all goal sets and propagates realizability information in a single fixed point. In the next section we further amplify this effect in a symbolic procedure.

\section{Symbolic Synthesis}
\label{sec:symbolic}

We now transition from an explicit representation of the game arena to a \emph{symbolic} representation in which states, transitions, and accepting conditions are encoded using Boolean formulas. 
This symbolic approach is grounded in Boolean synthesis~\cite{DBLP:conf/cav/FriedTV16} and yields substantial practical advantages, as fixed-point iterations can be carried out via efficient boolean operations~\cite{DBLP:conf/ijcai/ZhuTLPV17}.
We begin by reviewing symbolic DFA arenas, single-property symbolic synthesis and present our symbolic multi-property synthesis procedure.

\begin{definition}[Symbolic DFA Game Arena]\label{def:symbolic-arena}
Let $\arena=(2^{\x\cup\y},S,s_0,\delta,F)$ be a DFA game arena.
The corresponding \emph{symbolic DFA game arena} is a tuple
\[
\mathcal{F}=(\x,\y,\mathcal{Z},Z_0,\eta,f),
\]
where: $\mathcal{Z}$ is a set of $\lceil\log_2|S|\rceil$ Boolean \emph{state variables} such that each state $s\in S$ is encoded by an assignment $Z\in 2^{\mathcal{Z}}$, $Z_0\in 2^{\mathcal{Z}}$ encodes the initial state $s_0$, $\eta:\;2^{\x}\times 2^{\y}\times 2^{\mathcal{Z}}\to 2^{\mathcal{Z}}$ is a Boolean transition function such that, whenever $Z$ encodes $s$, the assignment $\eta(X,Y,Z)$ encodes the successor $\delta(s,X\cup Y)$, and $f$ is a Boolean formula over $\mathcal{Z}$ such that $f(Z)$ holds iff $Z$ encodes an accepting state, i.e., $s\in F$.
\end{definition}

\noindent
\noindent
Intuitively, $\mathcal{F}$ represents the game symbolically by viewing $X\in 2^{\x}$, $Y\in 2^{\y}$, and $Z\in 2^{\mathcal{Z}}$ as Boolean assignments to the variables in $\x$, $\y$, and $\mathcal{Z}$, with a variable set to $\ltltrue$ iff it is contained in the corresponding valuation, so that $\eta$ updates the state variables according to the transition relation, and $f$ characterizes the accepting states~\cite{DBLP:conf/ijcai/ZhuTLPV17}.

\subsection{Symbolic Single-Property Synthesis}
Following ~\cite{DBLP:conf/ijcai/ZhuTLPV17}, given a symbolic game arena $\mathcal{F}$, we maintain two kinds of formulas over the state variables: \emph{winning-state} formulas $w_i(Z)$ and  \emph{winning-move} formulas $t_i(Z,Y)$.
An assignment $Z\in 2^{\mathcal{Z}}$ encodes a winning state iff $w_i(Z)$ is $\ltltrue$, and a pair $(Z,Y)\in 2^{\mathcal{Z}}\times 2^{\y}$ encodes a winning choice of output at that state iff $t_i(Z,Y)$ is $\ltltrue$.

We initialize with the accepting condition,
$t_0(Z,Y) \equiv f(Z)$ and $w_0(Z) \equiv f(Z)$,
since the agent can terminate immediately upon reaching an accepting state.
We iteratively refine these formulas in a symbolic fixed-point computation:
\[
\begin{aligned}
t_{i+1}(Z,Y)\;&\equiv\;
t_i(Z,Y)\;\lor\\
&\quad\bigl(\neg w_i(Z) \land \forall X.\; w_i\bigl(\eta(X,Y,Z)\bigr)\bigr),\\
w_{i+1}(Z)\;&\equiv\;\exists Y.\; t_{i+1}(Z,Y).
\end{aligned}
\]
Intuitively, $t_{i+1}$ adds exactly those output choices that force the next state into the current winning region, regardless of the environment input. The conjunct $\neg w_i$ ensures that each step moves strictly closer to accepting states.
The iteration stabilizes when $w_{i+1}\equiv w_i =: w$, at which point $w$ characterizes all winning states.
The Boolean operations, can be carried out very efficiently using symbolic representations such as \emph{Binary Decision Diagrams} (BDDs)~\cite{DBLP:conf/cav/FriedTV16}.

A winning strategy can be extracted \emph{symbolically} if $w(z_0)$ is $\ltltrue$.
From the final move formula $t(Z,Y)$ we can construct a function 
$\tau:2^{\mathcal{Z}}\to 2^{\y}$ that selects, for each winning state encoding $Z$, a winning output $Y=\tau(Z)$, i.e., where $t(Z,Y)$ is $\ltltrue$~\cite{DBLP:conf/cav/FriedTV16}, inducing a symbolic transducer implementing the winning strategy~\cite{DBLP:conf/ijcai/ZhuTLPV17}.

\subsection{Symbolic Multi-Property Synthesis}
We now present our symbolic approach for multi-property $\ltlf$ synthesis.
Beyond the general benefits of symbolic methods, our approach exploits a key structural advantage: the Boolean formulas we construct naturally capture the monotonicity of multi-property realizability. Our Boolean representation of the maximal realizable sets compactly captures \emph{all} realizable subsets simultaneously without explicitly enumerating them in the formulas.
Remarkably, our symbolic procedure is algorithmically almost identical to the single-property version.
We add a linear number of Boolean variables representing the properties and then apply the same symbolic fixed-point computation as in the single-property approach. Unlike the explicit-state procedure in Section~\ref{sec:multi-prop}, which required new predecessor operators to handle sets of properties, the symbolic version does not require new operations. The result is a single formula over states and property variables that compactly captures for each state \emph{all} sets of properties that are simultaneously realizable from it.

\begin{definition}[Symbolic Multi-Property Game Arena]
\label{def:symbolic-multi-arena}
For each goal $\varphi_i \in \Phi$, let $\mathcal{F}_i = (\mathcal{X}, \mathcal{Y}, \mathcal{Z}_i, Z_i^0, \eta_i, f_i)$ be its symbolic DFA game arena. The \emph{symbolic multi-property arena}~is 
$$\mathcal{F}^{\times} = (\mathcal{X}, \mathcal{Y}, \mathcal{Z}, Z_0, \eta),$$
where $\mathcal{Z} = \mathcal{Z}_1 \cup \cdots \cup \mathcal{Z}_n$ are the state variables, $Z_0 = (Z_1^0, \ldots, Z_n^0)$ is the initial state, and $\eta = (\eta_1, \ldots, \eta_n)$ is the componentwise transition function.
\end{definition}
To represent sets of properties, we introduce $n$ Boolean variables $\mathcal{K} = \{k_1, \ldots, k_n\}$, where an assignment $K \in 2^{\mathcal{K}}$ encodes a subset $C \subseteq \Phi$ with $k_i = \ltltrue$ iff $\varphi_i \in C$.
We then define the initial winning relation formula by
\begin{equation*}
\label{eq:W0}
w_0(Z, K) := \bigwedge_{i=1}^n \bigl(k_i \Rightarrow f_i(Z_i)\bigr),
\end{equation*}
which holds when the state represented by $Z = (Z_1,\dots,Z_n) \in 2^{\mathcal{Z}}$ satisfies all properties represented by $K \in 2^{\mathcal{K}}$.
A key element of our approach is the use of implications in this formula, which directly captures the monotonicity of multi-property realizability: if $w_0(Z,K)$ holds for some set of properties, then it also holds for every subset. This allows us to represent exponentially many goal combinations compactly, without explicit enumeration.

We now apply the same symbolic fixed-point computation as in the single-property approach, with formulas ranging over both state variables $Z$ and goal variables $K$:
\begin{align*}
    t_{i+1}(Z,Y,K) &\equiv t_i(Z,Y,K) \\
    &\quad\vee \bigl(\neg w_i(Z,K) \wedge \forall X.\, w_i(\eta(X,Y,Z),K)\bigr),\\
    w_{i+1}(Z,K) &\equiv \exists Y.\, t_{i+1}(Z,Y,K),
\end{align*}
with $w_0$ defined as above and $t_0(Z,Y,K) = w_0(Z,K)$.
Crucially, this computation preserves the monotonicity property at each iteration. This allows us to apply the same symbolic operations as in the single-property case, without requiring modifications to handle the exponentially many combinations of properties. Upon stabilization, the resulting formula $w(Z,K)$ captures the full multi-property winning relation $\WinM$.
As in the single-property case, a winning strategy for a realizable $C$ can be extracted as a transducer by choosing appropriate output assignments from $t(Z,Y,K)$.

\begin{example}\label{ex:symbolic-winm}
Figure~\ref{fig:synthgame-3panel}c illustrates how the symbolic procedure compactly represents $\WinM$.
For instance, let $z_2$ be the symbolic encoding of state $s_2$ and assume it is an accepting state for $\varphi_1$ and $\varphi_2$ but not for $\varphi_3$, i.e., $f_1(z_2)=f_2(z_2)=\ltltrue$ and $f_3(z_2)=\ltlfalse$.
Then the initial winning relation formula $w_0(Z,K)$ becomes the following when fixing $Z$ to $z_2$:
\[
\begin{aligned}
w_0(z_2,K)
&\equiv (k_1\Rightarrow \ltltrue)\ \land\  (k_2\Rightarrow \ltltrue)\ \land\  (k_3\Rightarrow \ltlfalse)\\
&\equiv \neg k_3.
\end{aligned}
\]
Hence, $w_0(z_2,K)$ is satisfied by exactly those goal encodings with $k_3=\ltlfalse$, i.e., precisely the subsets of $\{\varphi_1,\varphi_2\}$.
The resulting winning relation formula for the state $s_0$ is:
\[
w(z_0,K)\equiv \neg k_1 \ \vee\ \neg k_3,
\]
which characterizes exactly the goal sets realizable from $s_0$.
This highlights the key advantage of the symbolic encoding: a small Boolean formula over $K$ succinctly captures the realizability relation over exponentially many goal combinations. In practice, simplifications are handled efficiently by the underlying BDD representation, which canonicalizes and reduces the formulas during the fixed-point computation.
\end{example}

Finally, extracting an \emph{optimal} goal set amounts to choosing a winning goal assignment $K$ such that a maximal number of goal variables $k_i$ is set to $\ltltrue$.
Given the BDD for $w(z_0,K)$, we can obtain such an assignment efficiently by quantifying over $K$ and selecting a witness that is undominated: there is no other winning assignment $K'$ that strictly extends it, i.e., sets all goals of $K$ and at least one additional goal to $\ltltrue$.

\section{Experiments}

\begin{table}[!t]
\centering

\newcommand{\tablebodyfont}{\fontsize{5.2}{5.2}\selectfont}
\newcommand{\tableheadfont}{\fontsize{5.5}{5.5}\selectfont}

\tablebodyfont
\setlength{\tabcolsep}{1.9pt}
\renewcommand{\arraystretch}{0.78}

\newcommand{\headrow}[1]{\addlinespace[1.45pt]#1\addlinespace[1.45pt]}

\resizebox{\columnwidth}{!}{%
\begin{tabular}{l @{\hspace{3pt}} c @{\hspace{3pt}} r r r}
\toprule
\headrow{\tableheadfont Family & \tableheadfont \# Goals & \tableheadfont \# States &
         \tableheadfont MPSynth [s] & \tableheadfont Enum [s] \\}
\midrule
\multirow{3}{*}{chain}
 & 2  & 16          & \textbf{0.0001} & 0.0106 \\
 & 4  & 256         & \textbf{0.0002} & 0.0196 \\
 & 6  & 15,625      & \textbf{0.0700}   & 0.157  \\
\addlinespace[2pt]

\multirow{3}{*}{counter}
 & 5  & 86,400      & \textbf{0.880}    & 2.067  \\
 & 5  & 181,200     & \textbf{2.113}    & 7.492  \\
 & 5  & 373,920     & \textbf{4.713}    & 60.74  \\
\addlinespace[2pt]

\multirow{3}{*}{until}
 & 6  & 10,077,696  & \textbf{81.10}    & 299.55 \\
 & 8  & 16,777,216  & \textbf{119.30}   & 468.29 \\
 & 10 & 60,466,176  & \textbf{562.70}   & 2640.49 \\
\addlinespace[2pt]

\multirow{3}{*}{next}
 & 7  & 35,831,808  & \textbf{109.28}   & 988.56 \\
 & 7  & 62,748,517  & \textbf{291.60}   & 2713.31 \\
 & 7  & 105,413,504 & \textbf{611.90}   & 4925.10 \\
\addlinespace[2pt]

\multirow{3}{*}{robotnav}
 & 15 & 12,582,912  & \textbf{56.50}    & 170.41 \\
 & 16 & 50,331,648  & \textbf{426.98}   & 702.01 \\
 & 17 & 100,663,296 & \textbf{1003.21}  & 2812.53 \\
\bottomrule
\end{tabular}%
}

\caption{Results across benchmark instances. For each family, we report the number of goals, arena size, and runtimes in seconds.}
\label{tab:multi-goal-co-realizability-summary}
\end{table}

\begin{figure}[t]
    \vspace*{-10pt}
    \centering
    \includegraphics[width=.86\linewidth]{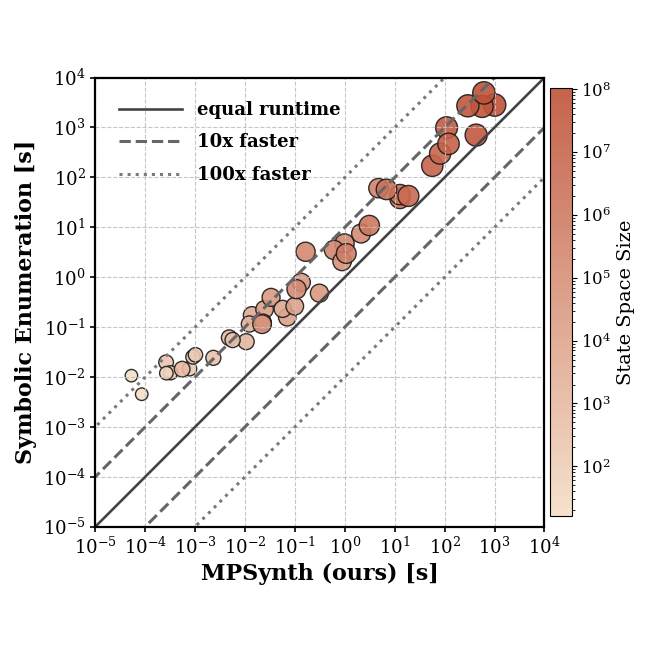}
    \vspace*{-18pt}
    \caption{Runtime comparison between our multi-property synthesis and enumeration baseline across all benchmark instances. Each point represents one instance, with color indicating state space size.}
    \label{fig:res}
\end{figure}

We implemented our symbolic multi-property synthesis procedure in a tool called \emph{MPSynth}\footnote{https://github.com/SymbolicSynthesis/MPSynth} 
based on the symbolic $\ltlf$ synthesis tool \emph{LydiaSyft}~\cite{DBLP:conf/tacas/ZhuF25}. 
Our method computes the winning relation and extracts a maximal realizable set and the corresponding winning strategy; reported runtimes include both steps.
We compare against an enumeration baseline that iterates over subsets $C\subseteq\Phi$ and checks realizability of $\bigwedge_{\varphi\in C}\varphi$ via single-property symbolic $\ltlf$ synthesis.
To avoid redundant checks, the baseline explores subsets in increasing size and prunes supersets of unrealizable sets~\cite{DBLP:conf/sigmod/AgrawalIS93}.

\paragraph{Benchmarks.}
We evaluate on a diverse set of benchmark families that stress different aspects of multi-property synthesis.
The \emph{chain}, \emph{until}, and \emph{next} families are parametric benchmarks that generate large product arenas with complex temporal dependencies~\cite{DBLP:journals/corr/abs-2511-09073}.
The \emph{counter} family extends the counter game from~\cite{DBLP:conf/ijcai/AminofGPR24}, where the agent manages a $k$-bit counter and must satisfy multiple conflicting increment policies specified by the environment.
The \emph{robotnav} family models complex robot navigation with multiple spatial objectives under environment constraints~\cite{DBLP:conf/ijcai/AminofGPR24}.
Across these families, we vary the number of properties to evaluate scalability.
Full benchmark descriptions, parameter configurations, and complete experimental results are provided in Appendix~\ref{app:exp}.

\paragraph{Results.}
Table~\ref{tab:multi-goal-co-realizability-summary} reports representative instances across all benchmark families, demonstrating the substantial performance advantage of our approach.
Figure~\ref{fig:res} visualizes the runtime comparison across all instances.
Our symbolic multi-property synthesis consistently and significantly outperforms the enumeration baseline across all problem instances, with speedups of up to two orders of magnitude.
This demonstrates that our symbolic multi-property synthesis substantially improves on the state-of-the-art enumeration-based approaches across diverse problem characteristics.

\section{Conclusion}
We presented a symbolic approach to multi-property $\ltlf$ synthesis that computes the full realizability relation between states and goal sets in a single fixed-point, avoiding explicit subset enumeration. Leveraging monotonicity, we encode goals with Boolean variables and leverage the standard symbolic fixed-point computations. Our experiments demonstrate speedups of up to two orders of magnitude over enumeration baselines, making our method an effective way to reason about goal trade-offs in over-subscribed domains.

We intend to expand our contribution in three key directions: (i) \emph{online strategy adaptation}, leveraging our precomputed winning relation to dynamically adapt to larger goal sets becoming realizable during execution if the environment acts less adversarial; (ii) \emph{quantitative preferences}, introducing weighted properties and (iii) \emph{probabilistic environments}, extending to Markov Decision Processes by replacing the controllable predecessor with probabilistic Bellman updates to maximize the likelihood of multi-goal satisfaction.



\section*{Acknowledgments}
This work was supported in part by the UKRI Erlangen AI Hub
on Mathematical and Computational Foundations of AI (No. EP/Y028872/1).

\bibliographystyle{named}
\bibliography{ijcai26_no_pages}

\appendix
\section{Proofs}
\label{app:proofs}
In this appendix, we provide the formal proofs for all stated theorems, establishing the correctness of our multi-property synthesis procedures, the corresponding constructions of winning strategies, and the time complexity results.

\subsection{Correctness of Multi-Property Synthesis}
\label{appendix:proof-thm1}

We provide the proof for Theorem~\ref{thm:WM-correct}, which establishes that the relation $\WinM$ characterizes multi-property realizability. We begin with some useful definitions and lemmas.

\begin{lemma}[Monotonicity of $\prem$]
\label{lem:premc-mono}
The controllable multi-property predecessor is monotonic: for any two relations $\mathcal{E}, \mathcal{E}' \subseteq S^\times \times 2^{\Phi}$, if $\mathcal{E} \subseteq \mathcal{E}'$, then $\prem(\mathcal{E}) \subseteq \prem(\mathcal{E}')$.
\end{lemma}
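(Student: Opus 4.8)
The plan is to argue directly from the definition of $\prem$, exploiting the fact that its only dependence on the argument relation is through a membership predicate, which is trivially monotone under set inclusion. First I would fix an arbitrary pair $(s,C) \in \prem(\mathcal{E})$ and unfold the definition: this hands me an output choice $Y \in 2^{\y}$ witnessing that for every input $X \in 2^{\x}$ the successor pair $(\delta^\times(s, Y \cup X), C)$ lies in $\mathcal{E}$. Next I would invoke the hypothesis $\mathcal{E} \subseteq \mathcal{E}'$ to promote each of these memberships from $\mathcal{E}$ to $\mathcal{E}'$, retaining the \emph{same} existential witness $Y$ and quantifying over the same set of inputs $X$. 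Re-reading the definition of $\prem$ with $\mathcal{E}'$ in place of $\mathcal{E}$ then yields $(s,C) \in \prem(\mathcal{E}')$, establishing the desired inclusion.

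The one point that warrants care is that the existential witness $Y$ and the universal $X$ range over the fixed sets $2^{\y}$ and $2^{\x}$, which do not depend on the relation; hence the entire $\exists Y\,\forall X$ quantifier prefix is preserved verbatim when passing from $\mathcal{E}$ to $\mathcal{E}'$, and no new witness needs to be constructed. I therefore do not anticipate any real obstacle: the statement is an instance of the general principle that an operator obtained by applying a fixed $\exists\forall$ prefix to a base predicate that is monotone in the relation is itself monotone.

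The purpose of this lemma in the overall development is purely structural: combined with the fact that the powerset lattice $\Pow{S^\times \times 2^{\Phi}}$ is complete, monotonicity of $\prem$ licenses the Knaster--Tarski theorem, guaranteeing that the ascending chain $\WinM_0 \subseteq \WinM_1 \subseteq \cdots$ converges to the \emph{least} fixed point $\WinM = \bigcup_{i \ge 0} \WinM_i$. I would flag this consequence explicitly so that it can be cited directly in the proof of Theorem~\ref{thm:WM-correct}.
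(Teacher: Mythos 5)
Your proposal is correct and follows exactly the paper's argument: fix $(s,C)\in\prem(\mathcal{E})$, reuse the same witness $Y$ after promoting each successor membership from $\mathcal{E}$ to $\mathcal{E}'$, and conclude $(s,C)\in\prem(\mathcal{E}')$. The additional remarks on the fixed $\exists\forall$ prefix and the Knaster--Tarski consequence are accurate but do not change the substance of the proof.
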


\begin{proof}
Let $(s, C) \in \prem(\mathcal{E})$. By definition, there exists an output $Y \in 2^{\y}$ such that for all environment responses $X \in 2^{\x}$, the successor state satisfies $(\delta^\times(s, X \cup Y), C) \in \mathcal{E}$. Since $\mathcal{E} \subseteq \mathcal{E}'$, it follows that $(\delta^\times(s, X \cup Y), C) \in \mathcal{E}'$ for all $X$, which implies $(s, C) \in \prem(\mathcal{E}')$.
\end{proof}

\begin{definition}[$i$-step Realizability]
A goal set $C \subseteq \Phi$ is \emph{$i$-step multi-goal realizable} from a state $s \in S^\times$ if there exists a strategy $g$ such that for every environment input sequence $\xi \in (2^{\x})^\omega$, the induced play $s = s_0s_1\dots$ starting at $s$ reaches a state $s_j \models C$ for some $0 \le j \le i$.
\end{definition}

\begin{lemma}[Characterization of $\WinM_i$]
\label{lem:kstepco}
For every $i \in \mathbb{N}$, state $s \in S^\times$, and goal set $C \subseteq \Phi$, 
$(s, C) \in \WinM_i$ if and only if $C$ is $i$-step multi-goal realizable from $s$.
\end{lemma}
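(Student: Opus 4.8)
The plan is to prove Lemma~\ref{lem:kstepco} by induction on $i$, establishing both directions of the biconditional simultaneously. This is the natural approach because $\WinM_i$ is defined by a fixed-point recurrence ($\WinM_{i+1} = \WinM_i \cup \prem(\WinM_i)$), and $i$-step realizability unrolls one step at a time, so the recursive structures match exactly. I would phrase the induction hypothesis as: for all $s$ and $C$, $(s,C) \in \WinM_i$ iff $C$ is $i$-step multi-goal realizable from $s$.

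\textbf{Base case.} For $i=0$, by definition $\WinM_0 = \{(s,C) \mid s \models C\}$, and $0$-step realizability from $s$ means the play reaches a state satisfying $C$ within $0$ steps, i.e., $s_0 = s \models C$ itself. These are literally the same condition, so the base case is immediate.

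\textbf{Inductive step.} Assuming the claim for $i$, I would prove it for $i+1$. For the forward direction, suppose $(s,C) \in \WinM_{i+1} = \WinM_i \cup \prem(\WinM_i)$. If $(s,C) \in \WinM_i$, then by the induction hypothesis $C$ is $i$-step realizable, hence trivially $(i{+}1)$-step realizable (the witnessing bound $j \le i$ also satisfies $j \le i+1$). Otherwise $(s,C) \in \prem(\WinM_i)$, so there is an output $Y$ such that for every input $X$ we have $(\delta^\times(s, Y \cup X), C) \in \WinM_i$. By the induction hypothesis each such successor is $i$-step realizable, with a corresponding strategy $g_{s'}$. I would then assemble a strategy $g$ for $s$ that plays $Y$ on the first round and thereafter, upon observing the environment's choice $X$ leading to successor $s' = \delta^\times(s, Y \cup X)$, continues according to $g_{s'}$ shifted by one step. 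Any play under $g$ reaches a $C$-satisfying state within $i$ further steps, hence within $i+1$ steps total, giving $(i{+}1)$-step realizability. For the converse, suppose $C$ is $(i{+}1)$-step realizable from $s$ via some strategy $g$. If $s \models C$ already, then $(s,C) \in \WinM_0 \subseteq \WinM_{i+1}$ and we are done. Otherwise, let $Y := g(\epsilon)$ be the agent's first move. For every input $X$, the residual strategy $g_X(\cdot) := g(X \cdot)$ witnesses $i$-step realizability of $C$ from the successor $s' = \delta^\times(s, Y \cup X)$, since every play from $s'$ must reach $C$ within the remaining $i$ steps. By the induction hypothesis, $(s', C) \in \WinM_i$ for every $X$, so $(s,C) \in \prem(\WinM_i) \subseteq \WinM_{i+1}$.

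\textbf{Main obstacle.} The routine parts are the set-membership unfoldings; the delicate part is the strategy-stitching in the inductive step, specifically handling the index shift and the empty-history case cleanly. I need to be careful that the composed strategy $g$ reads the environment's first input $X_0$ before committing to the continuation $g_{s'}$, and that the horizon bookkeeping ($j \le i$ relative to the successor translates to $j+1 \le i+1$ relative to $s$) is stated precisely. Because the agent commits $Y$ \emph{before} seeing $X$, the same $Y$ must work uniformly against all inputs, which is exactly what the $\exists Y \forall X$ structure of $\prem$ guarantees, so the quantifier alternation lines up with the game semantics without additional work. A minor point worth noting explicitly is that the strategies $g_{s'}$ may differ across successors $s'$, but since the agent observes $X$ it knows which $s'$ it is in and can branch accordingly, so no measurability or uniformity issue arises.
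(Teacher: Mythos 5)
Your proof is correct and follows essentially the same route as the paper's: induction on $i$, with the base case reducing to $s \models C$ and the inductive step splitting on membership in $\WinM_i$ versus $\prem(\WinM_i)$, using strategy stitching for one direction and residual strategies for the other. The only cosmetic difference is that in the converse direction you case-split on whether $s \models C$ while the paper splits on whether $C$ is already $i$-step realizable; both are valid and your treatment of the quantifier alternation and horizon bookkeeping is, if anything, more explicit than the paper's.
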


\begin{proof}
We proceed by induction on $i$.
\paragraph{Base Case ($i=0$):} 
By definition, $(s, C) \in \WinM_0$ iff $s \models C$. This is equivalent to saying the agent can satisfy $C$ in $0$ steps by terminating immediately at $s$.

\paragraph{Inductive Step:} 
($\Rightarrow$): Let $(s, C) \in \WinM_{i+1} = \WinM_i \cup \prem(\WinM_i)$. 
If $(s, C) \in \WinM_i$, the result follows by the induction hypothesis (IH). 
Otherwise, $(s, C) \in \prem(\WinM_i)$. Thus, there exists $Y \in 2^{\y}$ such that for all $X \in 2^{\x}$, the successor $s' = \delta^\times(s, X \cup Y)$ satisfies $(s', C) \in \WinM_i$. By IH, $C$ is $i$-step realizable from every such $s'$. By playing $Y$ at state $s$, the agent ensures that $C$ is realized in at most $1+i$ steps.

($\Leftarrow$): Suppose $C$ is $(i+1)$-step realizable from $s$. If $C$ is already $i$-step realizable, then $(s, C) \in \WinM_i \subseteq \WinM_{i+1}$ by IH. 
Otherwise, there must be an output $Y$ such that for all $X$, $C$ is $i$-step realizable from $s' = \delta^\times(s, X \cup Y)$. By IH, $(s', C) \in \WinM_i$ for all $X$. By definition of the predecessor, $(s, C) \in \prem(\WinM_i)$, and hence $(s, C) \in \WinM_{i+1}$.
\end{proof}

\begin{theorem}[Restatement of Theorem 1]
The fixed point $\WinM = \bigcup_{i\ge 0} \WinM_i$ characterizes multi-property realizability: for all $s\in S^\times$ and $C\subseteq \Phi$,
\[
(s,C)\in \WinM \quad \Longleftrightarrow \quad C \text{ is realizable from } s.
\]
Furthermore, the sequence $(\WinM_i)_{i \in \mathbb{N}}$ stabilizes in at most $|S^\times| \cdot 2^{|\Phi|}$ steps.
\end{theorem}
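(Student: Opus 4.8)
The plan is to combine the step-indexed characterization of Lemma~\ref{lem:kstepco} with a finiteness argument for the forward direction, and to handle the converse by a trap (counter-strategy) construction on the complement of the fixed point. First I would record stabilization: by Lemma~\ref{lem:premc-mono} the operator $\prem$ is monotone, so $\WinM_i\subseteq\WinM_{i+1}$ for all $i$, and since every $\WinM_i$ is a subset of the finite set $S^\times\times 2^{\Phi}$ of cardinality $|S^\times|\cdot 2^{|\Phi|}$, each strict inclusion adds at least one pair. Hence the chain stabilizes after at most $|S^\times|\cdot 2^{|\Phi|}$ steps at a relation $\WinM$ satisfying $\WinM=\WinM\cup\prem(\WinM)$; in particular $\WinM_0\subseteq\WinM$ and $\prem(\WinM)\subseteq\WinM$.

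For the forward implication ($\Rightarrow$), if $(s,C)\in\WinM$ then $(s,C)\in\WinM_i$ for some $i$, so Lemma~\ref{lem:kstepco} gives that $C$ is $i$-step realizable from $s$; relaxing the horizon bound $j\le i$ to an arbitrary finite $j$ shows that the same witnessing strategy makes $C$ realizable from $s$.

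For the converse ($\Leftarrow$) I would argue the contrapositive. Assume $(s,C)\notin\WinM$. Since $\WinM$ is a fixed point and $\WinM_0\subseteq\WinM$, we have $s\not\models C$ and $(s,C)\notin\prem(\WinM)$; unfolding the definition of $\prem$, the latter says that for every agent output $Y\in 2^{\y}$ there is an environment response $X\in 2^{\x}$ with $(\delta^\times(s,Y\cup X),C)\notin\WinM$. This furnishes an environment counter-strategy against any agent strategy $g$: at each round, given the agent's current output, pick such a violating $X$. The set $\{\,s'\mid (s',C)\notin\WinM\,\}$ is therefore closed under this response and, as noted above, contains no state satisfying $C$; so the induced play starting at $s$ stays in it forever and never satisfies $C$. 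Hence $C$ is not realizable from $s$.

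The hard part will be the converse direction, where realizability is an unbounded eventual guarantee whereas $\WinM=\bigcup_i\WinM_i$ is assembled from bounded-horizon approximants. The trap argument bridges this gap by using that $\WinM$ is a genuine fixed point rather than a finite stage: the failure of the fixed-point condition at $(s,C)$ is exactly what lets the environment remain outside the winning region indefinitely, so no separate K\"onig-type compactness argument is needed beyond the stabilization bound. I would be careful to define the environment's response as an honest function of the play history (selecting, for the agent's current move, one violating input $X$) so that the resulting infinite play is well defined and witnesses non-realizability.
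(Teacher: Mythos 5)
Your proposal is correct, and the stabilization and soundness ($\Rightarrow$) parts match the paper's proof essentially verbatim (stabilization from finiteness of $S^\times\times 2^{\Phi}$, soundness from Lemma~\ref{lem:kstepco} by relaxing the horizon). Where you genuinely diverge is the completeness ($\Leftarrow$) direction. The paper argues forward: it asserts that if $C$ is realizable from $s$ then, ``by the definition of $\ltlf$ synthesis,'' there is a finite horizon $i$ within which the agent can force a win, and then applies Lemma~\ref{lem:kstepco} to conclude $(s,C)\in\WinM_i$. This step quietly needs a \emph{uniform} bound on the winning horizon over all input sequences, which does not literally follow from Definition~\ref{def:ltlf-synthesis} (where the $k$ may depend on $\xi$) and is usually justified by a K\"onig-type argument on the finitely branching input tree or by the determinacy of reachability games. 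Your contrapositive trap argument sidesteps this entirely: from $(s,C)\notin\WinM$ and the fixed-point equation you extract, for every agent output, an environment response keeping the play in the complement $\{s'\mid (s',C)\notin\WinM\}$, which contains no state satisfying $C$; unrolling these responses against any fixed strategy $g$ yields a concrete input sequence $\xi$ witnessing non-realizability. This is a slightly longer but more self-contained and rigorous route, and your observation that no separate compactness argument is needed is exactly the point the paper's version glosses over. One cosmetic remark: the inclusion $\WinM_i\subseteq\WinM_{i+1}$ already follows from the definition $\WinM_{i+1}:=\WinM_i\cup\prem(\WinM_i)$, so the appeal to Lemma~\ref{lem:premc-mono} there is unnecessary (though harmless); monotonicity only matters if you want to identify the limit as the \emph{least} fixed point, which the theorem does not require.
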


\begin{proof}
The proof consists of two parts: stabilization and semantic correctness.

\paragraph{Stabilization:} 
The multi-property winning relation is computed as the least fixed point of the operator $F(\mathcal{E}) = \WinM_0 \cup \prem(\mathcal{E})$. Since the powerset lattice $(2^{S^\times \times 2^{\Phi}}, \subseteq)$ is complete and $\prem$ is monotonic (Lemma~\ref{lem:premc-mono}), the existence of the least fixed point is guaranteed by the \emph{Knaster-Tarski theorem}~\cite{DBLP:journals/pjm/Tarski55}. Because the domain $S^\times \times 2^{\Phi}$ is finite, the sequence  $\WinM_i$ must reach this fixed point in at most $|S^\times| \cdot 2^{|\Phi|}$.

\paragraph{Correctness:} 
We show that $(s, C) \in \WinM$ iff $C$ is realizable from $s$.
\begin{itemize}
    \item \textbf{Soundness ($\Rightarrow$):} Assume $(s, C) \in \WinM = \bigcup_{i \ge 0} \WinM_i$. Then $(s, C) \in \WinM_i$ for some finite $i$. By Lemma~\ref{lem:kstepco}, $C$ is $i$-step realizable from $s$, which implies $C$ is realizable.
    \item \textbf{Completeness ($\Leftarrow$):} If $C$ is realizable from $s$, then by the definition of $\ltlf$ synthesis (Definition~\ref{def:ltlf-synthesis}), there must exist a finite horizon $i$ such that the agent can force a win in at most $i$ steps. By Lemma~\ref{lem:kstepco}, this implies $(s, C) \in \WinM_i$. Since $\WinM_i \subseteq \WinM$, we have $(s, C) \in \WinM$.
\end{itemize}
\end{proof}

\subsection{Correctness of Strategy Extraction}
\label{appendix:proof-thm2}

We now establish the correctness of extracting a winning strategy as a finite transducer.

\begin{theorem}[Restatement of Theorem 2]
For any $C\subseteq \Phi$ with $(s_0^\times,C)\in \Win^{M}$, a transducer $T_C$ obtained by the strategy extraction construction exists and implements a winning strategy for fulfilling $C$: for every input sequence $\xi\in(2^\x)^\omega$, the play induced by $T_C$ reaches a state $s \in S^\times$ with $s\models C$ after finitely many steps, and the generated finite trace satisfies every $\varphi_i\in C$.
\end{theorem}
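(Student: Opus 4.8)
The plan is to prove that the transducer $T_C$ extracted by projecting $\WinM$ onto $C$ is well-defined and winning. The construction defines $Q := \WinM|_C$, $q_0 := s_0^\times$, output function $\omega_C$, and transition $\eta_C(q,X) := \delta^\times(q, \omega_C(q) \cup X)$. First I would verify that $\omega_C$ is \emph{well-defined}: for every $s \in \WinM_i|_C$ with $i > 0$, there must exist an output $Y \in 2^\y$ such that $\delta^\times(s, Y \cup X) \in \WinM_{i-1}|_C$ for all $X \in 2^\x$. This follows directly from the definition of $\WinM_i = \WinM_{i-1} \cup \prem(\WinM_{i-1})$: if $s \in \WinM_i|_C$ but $s \notin \WinM_{i-1}|_C$, then $(s,C) \in \prem(\WinM_{i-1})$, which by definition of $\prem$ guarantees exactly such a $Y$. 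Thus $\omega_C(s)$ can be chosen, and the transducer is fully specified.

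Next I would establish the winning property via a \emph{rank-decrease} argument. Define the rank of a winning state $s \in \WinM|_C$ as the least $i$ with $s \in \WinM_i|_C$. The key invariant is that from any state of rank $i > 0$, playing $\omega_C(s)$ drives every successor into a state of rank at most $i-1$, regardless of the environment's choice $X$. This is immediate from the choice of $\omega_C$. Fix an arbitrary input sequence $\xi = X_0 X_1 \cdots$ and consider the induced play $q_0 q_1 \cdots$ with $q_0 = s_0^\times$. By hypothesis $(s_0^\times, C) \in \WinM$, so $q_0$ has some finite rank $r$. I would then argue by induction that $q_j$ has rank at most $r - j$ as long as the rank remains positive, so after at most $r$ rounds the play reaches a state of rank $0$, i.e., a state in $\WinM_0|_C = \{s \mid s \models C\}$. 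At that point the transducer emits the designated \emph{done} action and halts.

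Finally I would connect reaching a state $s_k \models C$ to satisfaction of the $\ltlf$ goals. By Definition~\ref{def:goals-satisfied}, $s_k \models C$ means that for every $\varphi_i \in C$ the $i$-th component $q_i$ of $s_k$ lies in $F_i$. Since each $\aut_i$ recognizes exactly the finite traces satisfying $\varphi_i$, the prefix $\rho = (X_0 \cup Y_0)\cdots(X_k \cup Y_k)$ generated along the play is accepted by $\aut_i$, hence $\rho \models \varphi_i$, for every $\varphi_i \in C$ simultaneously. As $\xi$ was arbitrary, this holds for all input sequences, establishing that $T_C$ realizes $C$ in the sense of Definition~\ref{def:multigoal-real}.

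The step I expect to require the most care is the rank-decrease argument, specifically ensuring the inductive invariant is stated so that termination is guaranteed in finitely many steps \emph{uniformly} over all adversarial inputs. The subtlety is that $\omega_C$ is fixed per state, so I must confirm that the rank strictly decreases under \emph{every} environment response $X$, not merely for some $X$; this is precisely what the universal quantifier $\forall X$ in the definition of $\prem$ provides, so the obstacle is conceptual bookkeeping rather than a genuine difficulty. The remaining steps reduce to unfolding the definitions of $\WinM_i$, $\prem$, and multi-property satisfaction, all of which are already available from Theorem~\ref{thm:WM-correct} and its supporting lemmas.
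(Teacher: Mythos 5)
Your proof is correct, but it takes a more self-contained route than the paper. The paper's proof observes that the projection $\WinM|_C$ satisfies exactly the single-property fixed-point recursion on the arena $\arena^\times$ with final states $F_C=\{s\mid s\models C\}$, and then delegates everything else (existence of $\omega_C$, termination under all adversarial inputs, and the link between reaching $F_C$ and trace satisfaction) to the known correctness of single-property $\ltlf$ DFA-game synthesis \cite{DBLP:conf/ijcai/GiacomoV15}. You instead re-prove that cited result inline: well-definedness of $\omega_C$ from the definition of $\prem$, a rank-decrease invariant showing the play reaches rank $0$ in at most $r$ rounds uniformly over all environment responses, and the final step connecting $s_k\models C$ to acceptance by each $\aut_i$. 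Your version buys self-containedness and makes explicit where the $\forall X$ in $\prem$ is used; the paper's version buys brevity and reuses an established theorem. One small nit: your trace indexing is off by one --- the state $s_k$ is reached after reading the prefix $(Y_0\cup X_0)\cdots(Y_{k-1}\cup X_{k-1})$, not $(X_0\cup Y_0)\cdots(X_k\cup Y_k)$ --- but this does not affect the substance of the argument.
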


\begin{proof}
Fix a goal set $C \subseteq \Phi$. The multi-property winning relation restricted to $C$ is defined by $\WinM|_C = \{s \in S^\times \mid (s, C) \in \WinM\}$. Unfolding the definition of the multi-property fixed-point iteration shows that $\WinM|_C$ satisfies:
\begin{enumerate}
    \item $\WinM_0|_C = \{s \in S^\times \mid s \models C\}$,
    \item $\WinM_{i+1}|_C = \WinM_i|_C \cup \{s \in S^\times \mid \exists Y\in 2^{\y}. \forall X\in 2^{\x}. \delta^\times(s, X \cup Y) \in \WinM_i|_C\}$.
\end{enumerate}
This recovers precisely the traditional fixed-point characterization of single-property realizability played on the DFA arena $\mathcal{G}^\times$ with final states $F_C = \{s \in S^\times \mid s \models C\}$.

As shown in \cite{DBLP:conf/ijcai/GiacomoV15}, such a game solves the synthesis problem for the $\ltlf$ specification $\bigwedge_{\varphi_i \in C} \varphi_i$.
Our construction of the transducer $T_C$ coincides with the single-property case established in \cite[Theorem~4]{DBLP:conf/ijcai/GiacomoV15}. Consequently, the play reaches $F_C$ in a finite number of steps, ensuring that the generated finite trace satisfies all properties in $C$ simultaneously. 
\end{proof}

\subsection{Correctness of Maximal Multi-Property Synthesis}
\label{appendix:proof-thm3}

In this section, we establish that the maximal fixed-point iteration correctly computes the maximal realizable goal sets. In the following, we denote by $\WinM$ the multi-property realizability relation from Section~\ref{sec:solving-multi-property-games} and by $\WinMM$
the maximal variant from Section~\ref{sec:maximal_multi-prop}.

For any relation $\mathcal{E}\subseteq S^\times \times 2^{\Phi}$, we define its \emph{downward closure} as $\downcl (\mathcal{E}) := \{(s,D) \mid \exists C: (s,C)\in \mathcal{E} \text{ and } D\subseteq C\}$. This operator reconstructs all realizable subsets implied by the maximal sets.

\begin{lemma}[Downward Closure Preservation]
\label{lem:downward-closure}
For any relation $\mathcal{E}\subseteq S^\times\times 2^\Phi$, the $\maxc$ operator preserves the downward closure, i.e., $\downcl(\maxc(\mathcal{E})) = \downcl(\mathcal{E})$. Furthermore, the maximal predecessor and standard predecessor satisfy $\downcl(\premm(\mathcal{E})) = \prem(\downcl(\mathcal{E}))$.
\end{lemma}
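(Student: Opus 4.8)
The lemma makes two claims: first, that $\maxc$ preserves downward closure, i.e.\ $\downcl(\maxc(\mathcal{E})) = \downcl(\mathcal{E})$; and second, that the maximal and standard predecessors are related by $\downcl(\premm(\mathcal{E})) = \prem(\downcl(\mathcal{E}))$. I would prove the two claims separately, and each by mutual set inclusion.

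For the first claim, the inclusion $\downcl(\maxc(\mathcal{E})) \subseteq \downcl(\mathcal{E})$ is immediate from $\maxc(\mathcal{E}) \subseteq \mathcal{E}$ together with the obvious monotonicity of $\downcl$ under $\subseteq$. For the reverse inclusion, I would take $(s,D) \in \downcl(\mathcal{E})$, so there is some $(s,C) \in \mathcal{E}$ with $D \subseteq C$. The key observation is that $\maxc(\mathcal{E})$ retains, at each state $s$, the $\subseteq$-maximal property sets; since the collection of sets associated with $s$ in $\mathcal{E}$ is a finite family, $C$ is contained in some $\subseteq$-maximal element $C^* \supseteq C$ with $(s,C^*) \in \maxc(\mathcal{E})$. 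Then $D \subseteq C \subseteq C^*$ gives $(s,D) \in \downcl(\maxc(\mathcal{E}))$. This finiteness-of-the-family argument is the crux and should be stated explicitly.

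For the second claim, I would again argue by mutual inclusion, unfolding both predecessor definitions. For $\downcl(\premm(\mathcal{E})) \subseteq \prem(\downcl(\mathcal{E}))$: take $(s,D')$ in the left-hand side, so there is $(s,C) \in \premm(\mathcal{E})$ with $D' \subseteq C$. By definition of $\premm$, there is a witnessing $Y$ such that for all $X$ the successor $s' = \delta^\times(s, Y\cup X)$ admits some $D \supseteq C$ with $(s',D) \in \mathcal{E}$; since $D' \subseteq C \subseteq D$, this yields $(s',D') \in \downcl(\mathcal{E})$ for every $X$, so $(s,D') \in \prem(\downcl(\mathcal{E}))$. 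For the converse, take $(s,D') \in \prem(\downcl(\mathcal{E}))$, giving a $Y$ with $(s',D') \in \downcl(\mathcal{E})$ for all $X$, i.e.\ each $s'$ has some $D \supseteq D'$ with $(s',D) \in \mathcal{E}$; this is exactly the existential-cover condition in the definition of $\premm$, so $(s,D') \in \premm(\mathcal{E}) \subseteq \downcl(\premm(\mathcal{E}))$.

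The main obstacle is subtle but worth flagging: in the converse direction for the second claim, the same output $Y$ must work uniformly for all environment inputs $X$, while the covering set $D$ may differ per $X$. This is precisely why $\premm$ is defined with the quantifier order $\exists Y\, \forall X\, \exists D$, allowing the cover to depend on $X$; I would emphasize that the definitions of $\premm$ and of $\prem$ applied to a downward-closed relation align exactly on this point, so no uniform choice of $D$ is needed and the inclusion goes through directly.
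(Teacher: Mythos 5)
Your proof is correct and follows essentially the same route as the paper's: both claims are established by unfolding the definitions of $\maxc$, $\premm$, and $\prem$ and matching the resulting predicates. You are in fact slightly more careful than the paper on the first identity, where you explicitly invoke finiteness of the family of goal sets at each state to obtain a maximal element $C^*\supseteq C$ surviving in $\maxc(\mathcal{E})$ — a step the paper's proof leaves implicit.
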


\begin{proof}
The first identity follows immediately from the definition of $\maxc$: since $\maxc(\mathcal{E})$ only removes pairs $(s, C)$ that are subsets of some existing $(s, D) \in \mathcal{E}$, these removed subsets remain present in $\downcl(\maxc(\mathcal{E}))$.

For the second identity, fix a pair $(s, C)$. By definition, $(s, C) \in \prem(\downcl(\mathcal{E}))$ iff there exists $Y \in 2^{\y}$ such that for all $X \in 2^{\x}$, the successor state $s'$ satisfies $(s', C) \in \downcl(\mathcal{E})$. This is equivalent to $\exists D \supseteq C$ such that $(s', D) \in \mathcal{E}$. This is precisely the defining predicate for $(s, C) \in \premm(\mathcal{E})$, which, when downward closed, yields the result.
\end{proof}

\begin{theorem}[Restatement of Theorem 3]
The maximal fixed point $\WinMM = \bigcup_{i\ge 0} \WinMM_i$ characterizes the maximal realizable goal sets:
\[
\begin{aligned}
(s,C)\in \WinMM
\;\Longleftrightarrow\;&\ 
C \text{ is realizable from } s \;
\land \\ &\neg\exists D \supset C.\; D \text{ is realizable from } s .
\end{aligned}
\]
The sequence stabilizes in at most $|S^\times| \cdot 2^{|\Phi|}$ iterations. 
\end{theorem}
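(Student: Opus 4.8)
The plan is to prove Theorem~3 by reducing it to the already-established correctness of the non-maximal relation $\WinM$ (Theorem~1), using the downward-closure machinery of Lemma~\ref{lem:downward-closure} as the bridge. The central claim I would establish is the invariant
\[
\downcl(\WinMM_i) = \WinM_i \quad \text{for all } i \in \mathbb{N},
\]
which says that the maximal iteration, once downward-closed, recovers exactly the full (non-maximal) iteration at every step. Since Theorem~1 already tells us that $\WinM$ captures realizability, this invariant immediately transfers that semantic meaning to $\WinMM$.

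First I would prove the invariant by induction on $i$. For the base case, $\WinMM_0 = \maxc(\{(s,C)\mid s\models C\})$, and applying $\downcl$ together with the first identity of Lemma~\ref{lem:downward-closure} (that $\maxc$ preserves downward closure) gives $\downcl(\WinMM_0) = \downcl(\{(s,C)\mid s\models C\})$; since the base set is already downward closed (by monotonicity of $\models$, if $s\models C$ and $D\subseteq C$ then $s\models D$), this equals $\WinM_0$. For the inductive step, I compute
\[
\downcl(\WinMM_{i+1}) = \downcl\bigl(\maxc(\WinMM_i \cup \premm(\WinMM_i))\bigr),
\]
apply the $\maxc$-preservation identity to drop the outer $\maxc$, distribute $\downcl$ over the union, and use the second identity $\downcl(\premm(\mathcal{E})) = \prem(\downcl(\mathcal{E}))$ to rewrite the predecessor term. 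With the induction hypothesis $\downcl(\WinMM_i) = \WinM_i$, this collapses to $\WinM_i \cup \prem(\WinM_i) = \WinM_{i+1}$, completing the induction.

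Next I would derive the semantic characterization. From the invariant I obtain $\downcl(\WinMM) = \WinM$, so $(s,C)\in\downcl(\WinMM)$ iff $C$ is realizable from $s$ (Theorem~1). A pair $(s,C)$ lies in $\WinMM$ itself precisely when $(s,C)\in\downcl(\WinMM)$ \emph{and} it is not strictly dominated, i.e. no $(s,D)\in\WinMM$ has $C\subset D$; by the invariant and the definition of $\downcl$, this non-domination condition is equivalent to the absence of any strictly larger realizable $D\supset C$ at $s$. Combining these two facts yields exactly the stated biconditional: $(s,C)\in\WinMM$ iff $C$ is realizable from $s$ and no strict superset of $C$ is realizable from $s$. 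The stabilization bound follows because $\premm$ is monotonic (analogous to Lemma~\ref{lem:premc-mono}), the lattice $S^\times\times 2^\Phi$ is finite, and Knaster–Tarski guarantees the least fixed point is reached within $|S^\times|\cdot 2^{|\Phi|}$ steps.

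The main obstacle I anticipate is a subtlety in the inductive step: after applying $\maxc$ I must be certain that $\maxc$ commutes correctly with the union inside $\downcl$, since $\maxc(\WinMM_i \cup \premm(\WinMM_i))$ prunes across \emph{both} operands jointly rather than separately. The clean way around this is to lean entirely on the first identity of Lemma~\ref{lem:downward-closure}, which states $\downcl(\maxc(\mathcal{E})) = \downcl(\mathcal{E})$ for \emph{any} $\mathcal{E}$ — so I never need to reason about how $\maxc$ interacts with the union structure, only that the downward closure is invariant under it. A secondary point worth stating explicitly is that $\WinMM_i$ is antichain-valued at each state (only $\subseteq$-maximal sets survive $\maxc$), which is what makes the non-domination reading of membership in $\WinMM$ well-defined and exact.
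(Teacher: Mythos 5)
Your proposal follows essentially the same route as the paper's own proof: both establish the invariant $\downcl(\WinMM_i) = \WinM_i$ via the two identities of Lemma~\ref{lem:downward-closure}, conclude $\downcl(\WinMM) = \WinM$, and then read off the biconditional from Theorem~\ref{thm:WM-correct} together with the antichain structure enforced by $\maxc$, with stabilization from finiteness. Your explicit induction and your observation that the first identity of Lemma~\ref{lem:downward-closure} removes any need to reason about how $\maxc$ interacts with the union are precisely the (implicit) steps of the paper's argument, so the two proofs coincide.
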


\begin{proof}
Let $(\WinMM_i)_{i\in\mathbb{N}}$ be the maximal iteration and consider the sequence of downward-closed relations $E_i = \downcl(\WinMM_i)$. Using the identities from Lemma~\ref{lem:downward-closure}, we have:
\begin{align*}
    E_{i+1} &= \downcl\bigl(\maxc(\WinMM_i \cup \premm(\WinMM_i))\bigr) \\
    &= \downcl(\WinMM_i) \cup \downcl(\premm(\WinMM_i)) \\
    &= E_i \cup \prem(E_i).
\end{align*}
Since $E_0 = \downcl(\maxc(\{(s, C) \mid s \models C\})) = \WinM_0$, the sequence $(E_i)_{i \in \mathbb{N}}$ is identical to the standard multi-property iteration $(\WinM_i)_{i \in \mathbb{N}}$. As established in Theorem 1, the limit of this sequence is the full realizability relation $\WinM$. 
It follows that $\downcl(\WinMM) = \WinM$. This implies:
\begin{enumerate}
    \item \textbf{Soundness:} Every $(s, C) \in \WinMM$ is contained in $\WinM$, therefore, $C$ is realizable from $s$. 
    \item \textbf{Maximality:} By the application of $\maxc$ at each iteration, no pair $(s, C)$ is maintained if a strictly larger $D \supset C$ is already shown to be realizable. Thus, $\WinMM$ contains only $\subseteq$-maximal elements.
\end{enumerate}
Conversely, if $C$ is a maximal realizable set at $s$, then $(s, C) \in \maxc(\WinM)$.Since $\WinM = \downcl(\WinMM)$, there must exist $(s, D) \in \WinMM$ with $C \subseteq D$.  Since $D$ is realizable and $C$ is maximal, it forces $C = D$, proving $(s, C) \in \WinMM$. Stabilization follows from the finiteness of the domain $S^\times \times 2^\Phi$. 
\end{proof}

\subsection{Complexity of Multi-Property Synthesis}
\label{appendix:proof-thm4}

In this section, we establish the computational complexity of multi-property $\ltlf$ realizability and synthesis. 

\begin{theorem}[Restatement of Theorem 4]
Multi-property $\ltlf$ realizability and synthesis are \textsc{2EXPTIME}-complete.
\end{theorem}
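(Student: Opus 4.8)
The plan is to establish \textsc{2EXPTIME}-completeness via the two standard directions: a matching upper bound through a reduction to the already-developed multi-property fixed-point computation, and a lower bound by reduction from ordinary single-property $\ltlf$ synthesis, which is known to be \textsc{2EXPTIME}-complete~\cite{DBLP:conf/ijcai/GiacomoV13}.

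\textbf{Lower bound.} The hardness direction is immediate. Given a single $\ltlf$ formula $\varphi$ over $\ap = \x \cup \y$, consider the multi-property instance $\Phi = \{\varphi\}$ with the single goal $\varphi$. By Definition~\ref{def:multigoal-real}, the goal set $\{\varphi\}$ is realizable from $s_0^\times$ if and only if there is a strategy forcing the trace to satisfy $\varphi$, which is exactly single-property $\ltlf$ realizability of $\varphi$. Since the latter is \textsc{2EXPTIME}-hard and the reduction is trivially polynomial (indeed the identity on the formula), multi-property realizability is \textsc{2EXPTIME}-hard; the same instance witnesses hardness of synthesis, as a winning strategy for $\{\varphi\}$ is a winning strategy for $\varphi$.

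\textbf{Upper bound.} For membership, I would invoke the construction of Section~\ref{sec:multi-prop}. For each goal $\varphi_i \in \Phi$, the DFA $\aut_i$ recognizing $\varphi_i$ has at most doubly-exponentially many states in $|\varphi_i|$, since $\ltlf$-to-DFA translation incurs a double-exponential blow-up~\cite{DBLP:conf/ijcai/GiacomoV13}. The product arena $\arena^\times$ has $|S^\times| = \prod_{i=1}^n |S_i|$ states, which remains doubly exponential in the total input size $\sum_i |\varphi_i|$, since a product of $n$ doubly-exponential quantities is still doubly exponential in the combined size. By Theorem~\ref{thm:WM-correct}, the fixed point $\WinM$ is reached in at most $|S^\times| \cdot 2^{|\Phi|}$ iterations, each computing a controllable multi-property predecessor over the relation $S^\times \times 2^\Phi$; since $2^{|\Phi|} = 2^n$ is at most singly exponential in the input and each predecessor step is polynomial in the size of $\arena^\times$ and $2^\Phi$, the entire computation runs in time doubly exponential in the input. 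Testing $(s_0^\times, C) \in \WinM$ and extracting the transducer $T_C$ (Theorem~\ref{thm:multistrat-correct}) add only polynomial overhead in $|\WinM|$, so both realizability and synthesis lie in \textsc{2EXPTIME}. Combining the two bounds yields completeness.

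The main obstacle is purely bookkeeping rather than conceptual: I must verify carefully that taking the product over $n$ automata and carrying the extra factor $2^{|\Phi|} = 2^n$ does not push the complexity above \textsc{2EXPTIME}. The key observation is that a double exponential in $m$ dominates any product of polynomially many double exponentials and absorbs a single exponential factor $2^n$ (as $n$ is bounded by the input size), so the asymptotic class is preserved. I would state this absorption explicitly to preempt any concern that the exponential goal-set dimension inflates the bound.
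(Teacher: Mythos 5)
Your proposal is correct and follows essentially the same route as the paper's proof: hardness via the trivial reduction $\Phi=\{\varphi\}$ from single-property $\ltlf$ synthesis, and membership by bounding the doubly-exponential DFA construction, observing that the $n$-fold product and the $2^{|\Phi|}$ factor are absorbed into the doubly-exponential bound. The paper organizes the upper bound into the same three phases (automata, arena, fixed point) and makes the same absorption argument you flag as the key bookkeeping step.
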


\begin{proof}
The proof follows by establishing both membership in and hardness for the \textsc{2EXPTIME} complexity class.  For a set of properties $\Phi = \{\varphi_1, \dots, \varphi_n\}$, we write $m := \sum_{i=1}^n |\varphi_i|$ and $|\mathcal{G}_i|$ for the size of an automaton $\arena_i$. We write $\poly(m)$ for an unspecified polynomial function in $m$.

\paragraph{Membership:}
The multi-property synthesis procedure consists of three primary computational phases:
\begin{enumerate}
    \item \textbf{Automata Construction:} We construct a DFA arena $\arena_i$ for each property $\varphi_i$. The construction of a DFA from an $\ltlf$ formula is at most doubly exponential in the size of the formula~\cite{DBLP:conf/ijcai/GiacomoV15}. Therefore, $|\mathcal{G}_i| \le 2^{2^{\poly(m)}}$.
    \item \textbf{Arena Construction:} The multi-property game arena is the product of the individual arenas $\mathcal{G}_i$. Its size satisfies
$|S^\times|\le (2^{2^{\poly(m)}})^n = 2^{n\cdot 2^{\poly(m)}} \le 2^{2^{\poly(m)}}$,
so the arena is at most doubly exponential in the input size.
    \item \textbf{Fixed-Point Computation:} Solving the reachability game on this arena, whether via the standard pair-based iteration ($\WinM$) or the maximal variant ($\WinMM$), takes time linear in the number of product states $|S^\times|$ and exponential in the number of properties $n$. Since $2^{|\Phi|}\le 2^m$ is only singly exponential while $|S^\times|\le 2^{2^{\poly(m)}}$ is
already doubly exponential, the fixed-point phase is bounded by $2^{2^{\poly(m)}}$ overall.
\end{enumerate}
As the overall complexity is dominated by the DFA construction step, the problem resides in \textsc{2EXPTIME}.

\paragraph{Hardness:}
The hardness is immediate from the established \textsc{2EXPTIME}-completeness of single-property $\ltlf$ synthesis~\cite{DBLP:conf/ijcai/GiacomoV15}. Any single-property synthesis instance for property $\varphi$ can be trivially reduced to a multi-property instance by setting $\Phi = \{\varphi\}$.
\end{proof}

\section{Extended Experiments}
\label{app:exp}
We provide additional details on the used benchmarks and the full experimental resutls.
All code, benchmark instances, and tool parameters are included in the supplementary material.
All experiments were carried out on an AMD EPYC CPU with 8 cores and 32GiB of RAM. 

\subsection{Benchmark Families}
\label{app:benchmarks}

We evaluate on five benchmark families that stress different aspects of multi-property $\ltlf$ synthesis.

\paragraph{\textbf{Counter.}}
The \emph{counter} family is based on the counter-game benchmark introduced in the literature.\footnote{See the benchmark description in~\cite{DBLP:conf/ijcai/AminofGPR24}.}
Intuitively, the controller maintains a $k$-bit counter and must react to environment-controlled inputs that request (potentially conflicting) counter updates over time, yielding goal sets that cannot always be jointly enforced. This creates a large symbolic arena, already for moderate $k$, and a non-trivial trade-off structure among goals.

\paragraph{\textbf{Robot navigation.}}
The \emph{robotnav} family models navigation with multiple region-reaching goals under adversarial constraints~\cite{DBLP:conf/ijcai/AminofGPR24}.
The environment can disable parts of the world, e.g., by closing doors, and reaching certain regions can irrevocably restrict access to others. As a result, satisfying one navigation objective may rule out another, making this family well-suited for evaluating maximal realizable goal sets and the strategies that achieve them.

\paragraph{\textbf{Chain, Until, Next.}}
The \emph{chain}, \emph{until}, and \emph{next} families are parametric temporal-pattern benchmarks that generate large product arenas while keeping the individual formulas syntactically simple.
They are designed to stress specific temporal constructs, e.g., long dependency chains, nested and unfolded $\ltlU$ structure, and deep $\ltlX$-progression, and are adapted from recent benchmark collections used to evaluate temporal reasoning and synthesis at scale~\cite{DBLP:journals/corr/abs-2511-09073}.
Across these families, we scale the number of properties $|\Phi|$ and the pattern parameters, i.e., formula lengths and nesting depths to obtain instances ranging from small to very large arenas.

\subsection{Full Results}
\label{app:fullresults}

Table~\ref{tab:multi-goal-co-realizability} extends Table~\ref{tab:multi-goal-co-realizability-summary} by reporting results for \emph{all} benchmark instances.
For each instance, we list the end-to-end runtime of our tool \emph{MPSynth}, which includes
(i) computing the symbolic multi-property winning relation $w(Z,K)$ via the fixed point from Section~\ref{sec:symbolic}, and
(ii) extracting a maximum-cardinality realizable set $C\subseteq\Phi$ together with a corresponding winning strategy.
 Table~\ref{tab:multi-goal-co-realizability} additionally reports the runtime spent in Step~(ii).
Finally, we include the runtime of the enumeration baseline, which includes enumerating the individual subsets, with pruning subsets that are known to be unrealizable from previous computations, and the runtime for extracting the winning strategy for a maximal realizable subset.


\begin{table*}[t]
\centering
\resizebox{1\linewidth}{!}{%
\begin{tabular}{lr r rrrr}
\toprule
\multicolumn{4}{c}{} & \multicolumn{2}{c}{\textbf{MPSynth}} & \multicolumn{1}{c}{} \\
\cmidrule(lr){5-6}
\thead{Family} &
\thead{Instance\\(depth / length)} &
\thead{\# Goals} &
\thead{\# States} &
\thead{Fixed Point [s]} &
\thead{MaxSAT \&\\Strategy Extraction [s]} &
\thead{Enumeration \&\\Strategy Extraction [s]} \\
\midrule

\multirow{3}{*}{chain}
 & 3 & 2 & 16      & 4.93e-05 & 3.42e-06 & 0.0106 \\
 & 3 & 4 & 256     & 2.46e-04 & 1.55e-05 & 0.0196 \\
 & 4 & 6 & 15,625  & 0.0697   & 3.08e-04 & 0.157  \\
\addlinespace

\multirow{12}{*}{until}
 & 3 & 3  & 125        & 3.10e-04 & 1.06e-05 & 0.0121 \\
 & 4 & 5  & 7,776      & 0.0136   & 8.04e-05 & 0.173  \\
 & 4 & 6  & 46,656     & 0.133    & 4.02e-04 & 0.786  \\
 & 4 & 7  & 279,936    & 0.984    & 0.0012   & 4.722  \\
 & 4 & 8  & 1,679,616  & 12.66    & 0.0024   & 37.73  \\
 & 4 & 9  & 10,077,696 & 80.34    & 0.762    & 299.55 \\
 & 4 & 10 & 60,466,176 & 557.99   & 4.705    & 2640.49 \\
 & 5 & 5  & 16,807     & 0.0245   & 6.90e-05 & 0.226  \\
 & 6 & 5  & 32,768     & 0.0331   & 6.64e-05 & 0.393  \\
 & 6 & 6  & 262,144    & 0.603    & 3.91e-04 & 3.505  \\
 & 6 & 7  & 2,097,152  & 12.74    & 9.64e-04 & 45.06  \\
 & 6 & 8  & 16,777,216 & 119.30   & 0.0027   & 468.29 \\
\addlinespace

\multirow{12}{*}{counter}
 & 2 & 3 & 168     & 7.60e-04 & 9.30e-06 & 0.0146 \\
 & 3 & 3 & 396     & 0.0023   & 1.00e-05 & 0.0241 \\
 & 4 & 3 & 912     & 0.0048   & 1.14e-05 & 0.0611 \\
 & 5 & 3 & 2,016   & 0.0122   & 1.13e-05 & 0.115  \\
 & 3 & 4 & 1,584   & 0.0105   & 2.36e-05 & 0.0508 \\
 & 4 & 4 & 3,648   & 0.0229   & 2.42e-05 & 0.124  \\
 & 5 & 4 & 8,064   & 0.0565   & 2.47e-05 & 0.231  \\
 & 2 & 5 & 18,240  & 0.0990   & 7.53e-05 & 0.261  \\
 & 3 & 5 & 40,320  & 0.309    & 9.14e-05 & 0.476  \\
 & 4 & 5 & 86,400  & 0.880    & 1.18e-04 & 2.067  \\
 & 5 & 5 & 181,200 & 2.113    & 1.49e-04 & 7.492  \\
 & 6 & 5 & 373,920 & 4.713    & 1.14e-04 & 60.74  \\
\addlinespace

\multirow{6}{*}{next}
 & 8  & 3 & 1,000       & 5.36e-04 & 1.17e-05 & 0.0143 \\
 & 10 & 5 & 248,832     & 0.163    & 1.18e-04 & 3.225  \\
 & 10 & 6 & 2,985,984   & 6.825    & 6.03e-04 & 57.37  \\
 & 10 & 7 & 35,831,808  & 109.28   & 0.0023   & 988.56 \\
 & 11 & 7 & 62,748,517  & 291.60   & 0.0029   & 2713.31 \\
 & 12 & 7 & 105,413,504 & 611.90   & 0.0329   & 4925.10 \\
\addlinespace

\multirow{8}{*}{robotnav}
 & 2 & 10 & 98,304      & 0.0168 & 0.0051 & 0.114 \\
 & 3 & 11 & 196,608     & 0.0908 & 0.0164 & 0.573 \\
 & 4 & 12 & 786,432     & 0.975  & 0.0906 & 2.975 \\
 & 5 & 13 & 1,572,864   & 2.946  & 0.145  & 10.83 \\
 & 6 & 14 & 6,291,456   & 18.28  & 0.469  & 42.38 \\
 & 7 & 15 & 12,582,912  & 55.41  & 1.085  & 170.41 \\
 & 8 & 16 & 50,331,648  & 422.12 & 4.856  & 702.01 \\
 & 9 & 17 & 100,663,296 & 993.25 & 9.962  & 2812.53 \\
\addlinespace

env          & 3 & 3 & 18  & 7.73e-05 & 7.49e-06 & 0.0045 \\
priority     & 4 & 4 & 54  & 2.45e-04 & 1.90e-05 & 0.0119 \\
resource     & 3 & 5 & 162 & 8.57e-04 & 4.50e-05 & 0.0251 \\
surveillance & 3 & 6 & 486 & 0.0055   & 1.27e-04 & 0.0549 \\
complex      & 3 & 5 & 162 & 9.66e-04 & 4.62e-05 & 0.0278 \\

\bottomrule
\end{tabular}
}
\caption{Full results across all benchmark instances. For each instance, we report the number of goals, product-arena size, and runtimes in seconds. For MPSynth, we separate the symbolic fixed-point time from the time to compute a maximal realizable property set and extract the corresponding winning strategy.}
\label{tab:multi-goal-co-realizability}
\end{table*}

\end{document}